\newcommand{\cmark}{\ding{51}}%
\newcommand{\xmark}{\ding{55}}
\newcommand{\C}{{\sf C}} % Certain Inference
\newcommand{\U}{{\sf U}} % Uncertain Inference
\newcommand{\PP}{{\sf P}} % Possibility-Preserving Inference
\newtheorem{definition}{Definition}
\newtheorem{corollary}{Corollary}
\tikzset{ modal/.style={>=stealth',shorten >=1pt,shorten <=1pt,auto,node distance=1.5cm, semithick}, world/.style={circle,draw,minimum size=.5cm}, point/.style={circle,draw,inner sep=0.5mm,fill=black}, reflexive above/.style={->,loop,looseness=7,in=120,out=60}, reflexive below/.style={->,loop,looseness=7,in=240,out=300}, reflexive left/.style={->,loop,looseness=7,in=150,out=210}, reflexive right/.style={->,loop,looseness=7,in=30,out=330}}
\def\half{\nicefrac{1}{2}}
\def\R{\mathbb{R}}
\def\LL{\mathcal{L}}
\newcommand{\amrand}[2]% first argument: author of comments, second argument: comment
  {\leavevmode
   \marginpar
     [\raggedleft\scriptsize \leavevmode\normalcolor #1: #2]
     {\raggedright\scriptsize \leavevmode\normalcolor #1: #2}
  }
\title{Certain and Uncertain Inference\\with Indicative Conditionals}
\date{} %Draft of \today
\author{Paul \'Egr\'e \and Lorenzo Rossi \and Jan Sprenger}
\begin{document}

%LOR: General (minor) notational issues. Formal theories were indicated either as \sf ({\sf CC/TT}) or as \bf ({\bf Q}) or upright (QCC/SS \cap TT). For now, I uniformed everything as \sf (in line with our previous papers, where we use always \sf). Other minor issue: p is used both as a propositional atom and as the symbol for the probability function (I don't think this is a problem -- just flagging this). Local notational changes are indicated locally.

\maketitle

\begin{abstract}
\noindent 
This paper develops a trivalent semantics for the truth conditions and the probability of the natural language indicative conditional. Our framework rests on trivalent truth conditions first proposed by W. Cooper and yields two logics of conditional reasoning: (i) a logic {\C} of inference from \textit{certain} premises; and (ii) a logic {\U} of inference from \textit{uncertain} premises. But whereas \C\ is monotonic for the conditional, \U\ is not, and whereas \C\ obeys Modus Ponens, \U\ does not without restrictions. We show systematic correspondences between trivalent and probabilistic representations of inferences in either framework, and we use the distinction between the two systems to cast light, in particular, on McGee's puzzle about Modus Ponens. The result is a unified account of the semantics and epistemology of indicative conditionals that can be fruitfully applied to analyzing the validity of conditional inferences. 
\end{abstract}

\sloppy

%\textbf{Word Count:} ca.~12,000 words including footnotes, captions, abstract and references. Not included: equations, figures, tables and appendices
%main text + footnotes + abstract: 1,0000
%references: at least 1,500
%captions and other stuff: 500 

%\pagebreak

\section{Introduction and Overview}\label{sec:intro}

Research on indicative conditionals (henceforth simply ``conditionals'') pursues two major projects: the semantic project of determining their \textit{truth conditions}, and the epistemological and pragmatic project of explaining how we should \textit{reason} with them, and when we can \textit{assert} them. The two projects are related: \textcite[][589]{Jackson1979} states that ``we should hope for a theory which explains the assertion conditions in terms of the truth conditions'' while according to David \textcite[][297]{Lewis1976}, ``assertability goes by subjective probability'', where the value of the latter depends on when a proposition is true or false \parencites[see also][173--174]{Adams1965}[][565]{Jackson1979}[][278]{Leitgeb2017}. 

Ideally, we would have a unified treatment of truth conditions and probability of conditionals and, on that basis, a theory of reasoning with conditionals. Here is the standard approach. Suppose $A$ and $C$ are sentential variables of a propositional language $\LL$ without conditionals, and $\to$ denotes the ``if\ldots then\ldots'' connective. Then, the probability of the sentence $A \to C$ should go by the conditional probability $p(C|A)$ \parencite[e.g.,][]{Adams1965,Adams1975,Stalnaker1970}:
\begin{align*}\label{eqn:TE}
 p(A \to C) &= p(C|A)  \tag{Adams's Thesis}
\end{align*}
The idea is that the conditional ``if the sun is shining, Mary will go for a walk'' seems plausible if and only if it is likely that, given sunshine, Mary is going for a walk.\footnote{The extension of \ref{eqn:TE} to \textit{arbitrary} sentences $A$ and $C$, possibly involving conditionals, is known as ``Stalnaker's Thesis''.} Normative theories of conditionals often recognize \ref{eqn:TE} as a desideratum \parencite[e.g.,][]{Stalnaker1970,Adams1975}. The empirical data are more complicated, but \ref{eqn:TE} is well-supported when the antecedent is \textit{relevant} for the consequent \parencite[e.g., part of the same discourse:][]{OverEtAl2007,SkovgaardOlsenEtAl2016a}. 

Unfortunately, David Lewis's well-known triviality result complicates the picture. \textcite{Lewis1976} showed that, if (i) the probability of a sentence  depends in the standard way on its truth conditions (i.e., as expectation of semantic value),\footnote{In other words, the probability of $A$ corresponds to the total weight of the possible worlds where $A$ is true.} and (ii) the  probability function is closed under conditionalization, Adams's Thesis implies $p(A \to C) = p(C)$, whenever $A$ is compatible with both $C$ and its negation. Similar triviality results have been shown by \textcites{Hajek1989,Bradley2000,Milne2003}. This \textit{reductio ad absurdum} seems to preclude a unified semantic and epistemological treatment of conditionals, at least as far as probability and probabilistic reasoning is concerned.  

We show in this paper that this conclusion is premature: we introduce a third truth value (``neither true nor false'') and propose trivalent truth conditions for natural language indicative conditionals whose probability validates \ref{eqn:TE}. The probabilistic semantics allows us to define a logic for reasoning with \textit{certain} premises as well as a structurally similar logic for reasoning with \textit{uncertain} premises. 

%essential for analyzing controversial inference schemes, and for understanding when they are valid and when they are not. For example, the Or-to-If inference from ``either Alice or Bob will go to the party'' to   will turn out valid in deductive reasoning, but \textit{invalid} in defeasible reasoning. 

In other words, we argue that \textit{different logics of conditionals suit different epistemic situations}. When no conditionals are involved, the epistemic status of the premises does not matter: deductive logic validates all and only those inferences that preserve maximal certainty, i.e., probability one, and also all and only those inferences that do not increase uncertainty \parencite[e.g.,][]{Adams1996Primer}. There is just one notion of valid inference. But conditionals complicate the picture. When premises are supposed as being certain, the inference from ``if Alice goes to the party, Bob will'' to ``if Alice and Carol go to the party, Bob will'' appears valid. Alice's presence \textit{ensures} Bob's presence no matter his feelings for Carol. This picture changes when the premises are taken to be just \textit{likely} instead of certain: Carol going to the party can make a difference if we think that Alice going to the party does not guarantee that Bob will go in all circumstances. Conditional reasoning for uncertain premises has non-monotonic aspects and may require more than one notion of valid inference \parencite[compare][]{Santorio2022PPR}. Our account explains the difference between certain and uncertain reference by keeping the truth conditions of conditionals constant and by building a definition of probability based on those, while relaxing the definition of logical consequence when going from certain to uncertain reasoning. 

%Our account explains the difference between certain and uncertain inference with conditionals systematically with reference to  their truth conditions---and to our knowledge, it is unique in that respect.

% \parencite[e.g.,][]{Hailperin1996}. But when we reason from \textit{uncertain} premises which may be defeated, we need a different, non-monotonic consequence relation. There is no reason to expect a different picture in conditional reasoning: no single logic can be expected to capture inferences with certain and uncertain premises at the same time. As far as we know, our account is the only truth-conditional analysis that explains systematically which conditional inferences are valid in deductive reasoning, and which ones are valid in inductive, defeasible reasoning.}% \parencite[see also][]{Adams1996Preservation}.} 

%we make both deductive inferences, aiming at preserving truth, and defeasible inferences, aiming at preserving justification or assertability. 

We briefly expound the structure of our paper. The first part lays the semantic foundations: Section \ref{sec:basic} motivates the trivalent treatment of conditionals and Section \ref{sec:tables} introduces specific trivalent truth tables for the indicative conditional and the Boolean connectives, giving reasons to select the trivalent conditional operator first introduced by \cite{cooper1968propositional}. Section \ref{sec:ast} defines the (non-classical) probability of trivalent propositions in analogy with defining probability in a conditional-free language.

The second part of the paper focuses on conditional reasoning. From the definition of probability in trivalent semantics, Section \ref{sec:TT} and \ref{sec:LA} derive two logical consequence relations for certainty-preserving inference (=the logic {\C}) and for inferences that do not increase probabilistic uncertainty (=the logic {\U}). We show that {\C} and {\U} can be characterized as preserving semantic values within trivalent logic, and in Section \ref{sec:principles} we examine which principles of conditional logic they validate. In particular, we show that some principles such as Or-to-If or Modus Ponens with nested conditionals are controversisal because they hold in the context of reasoning with certain premises, but fail for uncertain premises.

%JS: added this sentence since the propositions are an important result of the paper
%JS: not necessary in the intro
% The resulting logic {\C} preserves non-falsity and interprets the connectives by means of \citeauthor{cooper1968propositional}'s (\citeyear{cooper1968propositional}) truth tables for the trivalent conditional and negation, conjunction and disjunction. 

The third part contains applications, comparisons and evaluations: Section \ref{sec:MP} discusses nested conditionals and McGee's objection to Modus Ponens from the vantage point of our semantics and the two separate logics for certain and uncertain inference. Section \ref{sec:comp} draws comparisons with other theories. Section \ref{sec:ccl} highlights the strengths and limits of our account. Appendix \ref{app:proofs} provides proof details.

\section{Truth Conditions: The Basic Idea}\label{sec:basic}

It is controversial whether indicative conditionals have factual truth conditions and can be treated as expressing propositions \parencite[e.g., see the dialogue in][]{JeffreyEdgington1991}. According to the non-truth-conditional, probabilistic analysis of conditionals \parencite{Adams1965,Adams1975,Edgington1986,Edgington1995,Edgington2009,over2017defective}, indicative conditionals do not express propositions; at most they have partial truth conditions. 
\begin{quote}\small
  [...] the term `true' has no clear ordinary sense as applied to conditionals, particularly to those whose antecedents prove to be false [...]. In view of the foregoing remarks, it seems to us to be a mistake to analyze the logical properties of conditional statements in terms of their truth conditions. \parencite[][169--170]{Adams1965}
\end{quote}
Non-truth-conditional accounts \textit{stipulate} that $p(A \to C) = p(C|A)$ and develop a probabilistic theory of reasoning with conditionals on the basis of high probability preservation (called ``logic of reasonable inference'' by Adams). This move yields a powerful logic for capturing core phenomena of reasoning with simple conditionals, such as their non-monotonic behavior in certain contexts. This success is recognized by truth-conditional accounts \parencites[e.g.,][485]{McGee1989}[][544]{ciardelli2020indicative}, but the probabilistic approach  severs the link between semantics and epistemology. In particular, it does not cover nested conditionals and compounds of conditionals. Moreover, due to the lack of truth conditions, it does not clarify how one can argue and disagree about conditional sentences in a similar way as we do for normal, non-conditional propositions \parencite[][547]{Bradley2012}.

However, even  a defender of a non-truth-conditional view such as \textcite[][187]{Adams1965} admits that we feel compelled to say that a conditional ``if $A$, then $C$'' has been \textit{verified} if we observe both $A$ and $C$, and \textit{falsified} if we observe $A$ and $\neg C$. For example, take the sentence ``if it rains, the match will be cancelled''; it seems to be true if it rains and the match is in fact cancelled, and false if the match takes place in spite of rain. Indeed, what else could be required for determining the truth or falsity of the sentence?

This ``hindsight problem''  \parencite[the terminology is from][]{khoo2015indicative} is a prima facie reason for treating conditionals as propositions, and assigning them factual truth conditions.  Defenders of non-propositional accounts need to explain why observations in our actual world are \textit{sufficient} for the truth or falsity of ``if $A$, then $C$'', and why $A \to C$ behaves so differently when $A$ is false. 

%LOR: removed "the truth value of", because defenders of non-propositional accounts will not be compelled to say things about the truth value of a conditional, but they should say something on its behavior nonetheless

Truth-conditional accounts of conditionals address this point. They come in various guises: variably strict conditionals \parencite[e.g.,][]{Stalnaker1968}, restrictor semantics \parencite[e.g.,][]{Kratzer2012}, dynamic semantics \parencite[e.g.,][]{Gillies2009}, information state semantics \parencite[e.g.,][]{ciardelli2020indicative,Santorio2022path}, and many more.\footnote{The material conditional analysis, endorsed by Jackson and Lewis, claims that the truth conditions of the indicative and the material conditional agree, and that perceived differences are due to pragmatic, not to semantic factors \parencite{Jackson1979,Grice1989}. This approach, however, gives up on a unified picture of truth conditions and probability in the first place. On that account, if sun were unlikely, the probability of ``if the sun is shining, Mary is going for a walk'' would be close to one regardless of Mary's intentions, which looks unacceptable.} Many of these accounts \textit{emulate} Adams's probabilistic logic of reasonable inference, or central parts thereof. For example, truth preservation in Stalnaker's modal framework famously validates the same inference schemes as Adams's logic in their common domain. All of them, however, face a non-trivial task of modelling the probability of conditionals. Truth preservation works in these logics like a \textit{qualitative} plausibility order, but their analysis of the \textit{quantitative} probability of conditionals must, in the light of Lewis's triviality result, deviate systematically from Adams's thesis. Thus, both the truth-conditional and the non-truth-conditional approaches seem to lose out on some important aspects of conditionals. 

%These are, very roughly, the main roads. Another analysis, endorsed by Jackson and Lewis, claims that the truth conditions of the indicative and the material conditional agree, and that perceived differences are due to pragmatic, not to semantic factors \parencite{Jackson1979,Grice1989}. This approach, however, gives up on a unified picture of truth conditions and probability in the first place. On that account, if sun were unlikely, the probability of ``if the sun is shining, Mary is going for a walk'' would be close to one regardless of Mary's intentions, which looks unacceptable. 

In this paper, we would like to resolve the impasse by treating ``if $A$, then $C$'' as a \textit{conditional assertion}---i.e., as an assertion about $C$ upon the supposition that $A$ is true. Whereas, when the antecedent is false, the speaker is committed to neither truth nor falsity of the consequent. This view takes into account Adams's observation that ``true'' has no clear ordinary sense when applied to indicative conditionals; it has been voiced perhaps most prominently by \textcite[][p.~12, our emphasis]{quine1950methods}: 
\begin{quote}\small
\noindent An affirmation of the form ``if \textit{p} then \textit{q}'' is commonly felt less as an affirmation of a conditional than as a conditional affirmation of the consequent. If, after we have made such an affirmation, the antecedent turns out true, then we consider ourselves committed to the consequent, and are ready to acknowledge error if it proves false. \textit{If on the other hand the antecedent turns out to have
been false, our conditional affirmation is as if it had never been made.} 
\end{quote}
In other words, asserting a conditional makes an epistemic commitment only in case the antecedent turns out to be true. If it turns out to be false, the assertion is retracted: there is no factual basis for evaluating it \parencite[see also][]{belnap1970conditional,belnap1973restricted}. Therefore it is classified as neither true nor false. The ``gappy'' or ``defective'' truth table of Table \ref{tab:CA} interprets this view as a partial assignment of truth values to conditionals  \parencite[e.g.,][]{reichenbach1935wahr,definetti1936logique,Adams1975,baratgin2013uncertainty,over2017defective}.\footnote{Some accounts also take the conditional probability $p(A|C)$ as a possible semantic value for the conditional $A \to C$ \parencite[e.g.,][]{McGee1989,StalnakerJeffrey1994,sanfilippo2020probabilities}, but this analysis reverses the traditional direction of the dependency between the probability and the truth conditions of a sentence: probability should depend on how often we find a sentence to be true, not vice versa.} 

%Sometimes the probability $p(C|A)$ also figures as a surrogate truth value in the lower row of the table \parencites[e.g., in the random variable proposal by][]{StalnakerJeffrey1994}[or in][]{sanfilippo2020probabilities}. 

\begin{table}[htb]
\centering
\begin{tabular}[c]{l|cc}
Truth value of $A \to C$ & $v(C) = 1$ & $v(C) = 0$\\
 \hline
 $v(A) = 1$ & 1 & 0\\
 $v(A) = 0$ & (neither) &  (neither) \\
 \end{tabular}
\caption{\small `Gappy'' or ``defective'' truth table for a conditional $A \to C$ for a (partial) valuation function %LOR: removed, because we have not introduced notations for languages yet. 
%$v: \mathcal{L}^\to \longmapsto \{ 0, 1\}$ 
in a language with conditional.}\label{tab:CA}
\end{table} 

%The resulting account is basically an empiricist view of indicative conditionals, where the truth value of any indicative conditional can in principle be settled by experiment and observation. 

%Conditional assertions $C|A$ are isomorphic to \textit{conditional bets on the consequent}, upon the supposition of the antecedent. Both make a commitment on $C$ that is retracted if $A$ is false. Indeed, a conditional bet on $C$ given $A$ is won if $A$ and $C$ obtain, lost if $A$ and $\neg C$ obtain, and \textit{called off} otherwise. The probability of a conditional assertion can then be defined by the betting odds on the associated conditional bets, i.e., by the ratio between the chances to win and to lose it \parencite[see also][175]{Adams1965}. This approach has been very fruitful as a starting point for a logic of conditionals in uncertain reasoning \parencite[e.g.,][]{Adams1975,Adams1996Primer}: if a conditional $A \to C$ makes a conditional assertion $C|A$, the isomorphism between conditional assertions and conditional bets also carries over to the probability of a conditional. 

However, without a full truth-conditional treatment, such an account is limited: it neither evaluates nested conditionals, nor Boolean compounds of conditionals. If we could  complete Table \ref{tab:CA} and provide full truth conditions in a satisfactory way, this would greatly increase the scope and descriptive power of conditional reasoning, and facilitate the identification of theorems and valid inferences. 

The obvious candidate for such truth conditions is a trivalent truth table, where the absence of commitment to the consequent $C$ is represented by a third truth value. Instead of using partial valuations, we assign a third semantic value, $\half$ or ``indeterminate'', when the antecedent is false (See Table \ref{tab:CAtri}). This is a recurring idea in the literature, defended, among others, by \textcite{definetti1936logique,reichenbach1944philosophic,jeffrey1963indeterminate,cooper1968propositional,belnap1970conditional,belnap1973restricted,manor1975propositional,farrell1986implication,mcdermott1996truth,olkhovikov2016new,cantwell2008logic,rothschild2014capturing,ERS2021JPLa,ERS2021JPLb}.

\begin{table}[htb]
\centering
\begin{tabular}[c]{l|cc}
Truth value of $A \to C$ & $v(C) = 1$ & $v(C) = 1$\\
 \hline
 $v(A) = 1$ & 1 & 0\\
 $v(A) = 0$ & $\half$ & $\half$  \\
 \end{tabular}
\caption{\small Partial trivalent truth table for a conditional $A \to C$ for a partial valuation function 
%LOR: Removed, same reason as above
%$v: \mathcal{L}^\to \longmapsto \{ 0, \half, 1\}$ 
in a language with conditional.}\label{tab:CAtri}
\end{table}

This basic idea has to be developed in various directions. Firstly, we need to decide how to extend the truth table of Table \ref{tab:CA} to a fully trivalent truth table for $A \to C$ where $A$ and $C$ can also take the value $\half$ (=neither true nor false, indeterminate). Secondly, we need to decide how to interpret the standard Boolean connectives $\wedge$, $\vee$, $\neg$ in the context of propositions which can take three different truth values. Doing so will allow us to deal with nested conditionals, and more generally, with arbitrary compounds of atomic sentences connected by the standard connectives and $\to$. Thirdly, we have to define a probability measure for trivalent propositions and a consequence relation for reasoning with certain and uncertain premises. We approach these tasks in turn in the next sections. 

% Thirdly, we need to decide on a \textit{trivalent logical consequence relation} for two kinds of reasoning: for deductive reasoning where premises are certain, and for defeasible reasoning (e.g., Adams's ``reasonable inference'') where premises are uncertain. 

%The latter logic will be our analogue of System {\sf P} \parencite{Adams1975,KLM1990}, Hawthorne and Makinson's System {\sf O} \parencite{Hawthorne1996,HawthorneMakinson2007}, and other probabilistic logics for uncertain reasoning. But first, let us set up the truth tables for the conditional and the Boolean connectives. 

%Since the options for deductive inference have been surveyed in \textcite{ERS2021JPLa,ERS2021JPLb}, we focus in this paper on a logic of uncertain inference {\sf A} that covers conditionals. 

\section{Trivalent Truth Tables}\label{sec:tables}

We start by extending the basic idea of Table \ref{tab:CAtri} to a full trivalent truth table for $A \to C$. The two main options are shown in Table \ref{tab:fincoo} and have been proposed by Bruno \textcite{definetti1936logique} and William \textcite{cooper1968propositional}, respectively. We abbreviate the two connectives with ``DF'' and ``CC'' (the latter after Cooper-Cantwell).\footnote{\textcite{belnap1973restricted}, \textcite{olkhovikov2016new} and \textcite{cantwell2008logic} rediscovered Cooper's truth table independently.} 
%LOR: added sentence about abbreviation. 
In both of them the value $\half$ can be interpreted as ``neither true nor false'', ``void'', or ``indeterminate''. There is moreover a systematic duality between those tables: whereas de Finetti treats indeterminate antecedents like false antecedents, Cooper treats them like true ones. Thus, in de Finetti's table the second row copies the third, whereas in Cooper's table it copies the first. 

\begin{table}[ht]
\centering
\begin{tabular}[t]{l|ccc}
$f_{\to_{DF}}$ & 1 & $\half$ & 0\\
 \hline
 1 & 1 &  $\half$ & 0\\
$\half$ & $\half$ &  $\half$& $\half$\\
 0 & $\half$ &  $\half$& $\half$\\
 \end{tabular}
\qquad
 \begin{tabular}[t]{l|ccc}
$f_{\to_{CC}}$ & 1 & $\half$ & 0\\
 \hline
 1 & 1 &  $\half$ & 0\\
$\half$ & 1& $\half$& 0\\
 0 & $\half$ &  $\half$& $\half$\\
 \end{tabular}
 \caption{\small Truth tables for the de Finetti conditional (left) and the Cooper conditional (right).}\label{tab:fincoo}
\end{table}

Both options can be pursued fruitfully, and the choice between them primarily depends on the results which they yield. Our choice is the Cooper table since it interacts more naturally with  our probabilistic treatment of conditionals and the various notions of logical consequence \parencite[a detailed analysis is given in][]{ERS2021JPLa}. However, for the arguments made in this section, which concern only simple, non-nested conditionals, there is no difference between the two.\footnote{Intermediate options vary the middle row, e.g., with the triple $\langle \half, \half, 0\rangle$ \parencite{farrell1986implication} or the triple $\langle 1, \half, \half\rangle$, suggested by a referee. The former option is reviewed in \cite{ERS2021JPLa}, while the latter option forsakes the equivalence of $\neg (A \rightarrow C)$ and $A \rightarrow \neg C$, typically seen as a desirable property.}
%LOR: Added footnote to address a comment by reviewer 1. 

The second choice concerns the definition of the standard logical connectives. A natural option is given by the familiar \L ukasiewicz/de Finetti/Strong Kleene truth tables, displayed in Table \ref{tab:SK}. Conjunction corresponds to the ``minimum'' of the two values, disjunction to the ``maximum'', and negation to inversion of the semantic value. In particular, the trivalent analysis admits, next to the indicative conditional $A \to C$, a Strong Kleene ``material'' conditional $A \supset C$, definable as $\neg (A \wedge \neg C)$, or equivalently, $\neg A \vee C$.  

\begin{table}[ht]
\centering
\begin{tabular}{c|c}
%\hline
& $f_{\neg}$\\
\hline
$1$ & $0$\\
$\half$ & $\half$\\
$0$ & $1$\\
%\hline
\end{tabular}
\qquad
\begin{tabular}{c|ccc}
%\hline
$f_{\wedge}$ & $1$ & $\half$ & $0$\\
\hline
$1$ & $1$ & $\half$ & $0$\\
$\half$ & $\half$ & $\half$ & $0$\\
$0$ & $0$ & $0$ & $0$\\
%\hline
\end{tabular}
\qquad
\begin{tabular}{c|ccc}
$f_{\vee}$ & 1 & $\half$ & 0\\
 \hline
 1 & 1 & 1 & 1\\
$\half$ & 1 &  $\half$& $\half$\\
 0 & 1 &  $\half$ & 0\\
 \end{tabular}
\caption{\small Strong Kleene truth tables for negation, conjunction, and disjunction.}\label{tab:SK}
\end{table}

The Strong Kleene truth table for negation is uncontroversial and also yields the consequence that the conditional commutes with negation (for either the DF- or the CC-conditional): $\neg (A \to C)$ 
%LOR:  is extensionally equivalent to --> has the same truth table of
has the same truth table as $A \to \neg C$. This is a very natural choice for interpreting conditional assertions: when we argue about $A \to C$, both sides presuppose the antecedent $A$ and argue about whether we should be committed to $C$ or rather to $\neg C$, given $A$ \parencite[see also][247]{Ramsey1929}. 
%the latter constitutes a basis for arguing about $C$. Hence, the negation of the conditional assertion should be the conditional assertion of the negation of the consequent. 
%JS: reformulated the second sentence and deleted the third

Unfortunately, the Strong Kleene truth tables for conjunction and disjunction have a very annoying consequence: ``partitioning sentences'' such as  $(A \to B) \wedge (\neg A \to C)$ will always be indeterminate or false \parencite[][368--370]{belnap1973restricted,Bradley2002}. However, a sentence such as: 
\begin{quote}\label{ex:beach}
\noindent If the sun shines tomorrow, John goes to the beach; and if it rains, he goes to the museum. 
\end{quote} 

\noindent seems to be true (with hindsight) if the sun shines tomorrow and John goes to the beach. This intuition is completely lost in Strong Kleene semantics, regardless of whether we use the de Finetti or the Cooper table for the conditional. Even worse, ``obvious truths'' such as $(A \to A) \wedge (\neg A \to \neg A)$ are always classified as indeterminate.

For this reason, we endorse alternative truth tables for conjunction and disjunction, advocated by \textcite{cooper1968propositional} and \textcite{belnap1973restricted}. See  Table \ref{tab:quasi}. In these truth tables, indeterminate sentences are ``truth-value neutral'' in Boolean operations: true and false sentences do not change truth value when conjoined or disjoined with an indeterminate sentence. This can be motivated by observing that such sentences do not add determinate content as empirical statements do. We call these connectives \textit{quasi-conjunction} and \textit{quasi-disjunction}. They retain the usual properties of Boolean connectives (associativity, commutativity, the de Morgan laws, etc.), solve the problem of partitioning sentences, and have no substantial disadvantages with respect to Strong Kleene truth tables in conditional logic. %LOR: edited footnote, to address Reviewer 2's worry. 
Moreover, they have two non-trivial benefits. 

\begin{table}[htb]
\centering
\begin{tabular}{c|c}
%\hline
& $f_{\neg}$\\
\hline
$1$ & $0$\\
$\half$ & $\half$\\
$0$ & $1$\\
%\hline
\end{tabular}
\qquad
\begin{tabular}{c|ccc}
$f'_{\wedge}$ & $1$ & $\nicefrac{1}{2}$ & $0$\\
\hline
$1$ & $1$ & $1$ & $0$\\
$\nicefrac{1}{2}$ & $1$ & $\nicefrac{1}{2}$ & $0$\\
$0$ & $0$ & $0$ & $0$\\
\end{tabular}
\qquad
\begin{tabular}{c|ccc}
$f'_{\vee}$ & $1$ & $\nicefrac{1}{2}$ & $0$\\
\hline
$1$ & $1$ & $1$ & $1$\\
$\nicefrac{1}{2}$ & $1$ & $\nicefrac{1}{2}$ & $0$\\
$0$ & $1$ & $0$ & $0$\\
\end{tabular}
\caption{\small Truth tables for Strong Kleene negation, paired with quasi-conjunction and quasi-disjunction as defined by \textcite{cooper1968propositional}.}\label{tab:quasi}
\end{table}

First, quasi-disjunction avoids the Linearity principle that $(A\to B) \vee (B\to A)$ cannot be false. This schema was famously criticized by \textcite{maccoll1908if}, who pointed out that neither  ``if John is red-haired, then John is a doctor'', nor ``if John is a doctor, then he is red-haired'', nor their disjunction seems acceptable in ordinary reasoning. A semantics that qualifies such expressions as either true or indeterminate might thus be considered inadequate. 
%\amrand{LOR}{Should we add a probabilistic reading of Chris Scambler's example, to explain away its apparent awkwardness?}
Using quasi-conjunction and quasi-disjunction instead, $(A\to B) \vee (B\to A)$ is false when $A$ is true and $B$ is false (or vice versa).

There is also a principled reason for adopting quasi-conjunction and quasi-disjunction, based on the connection between conditional bets and conditional assertions. How should we evaluate the conjunction of conditional assertions like $(A \to B) \wedge (C \to D)$? The interesting case occurs when $A$ is false, but $C$ and $D$ are true. \textcite[496--501, in particular Theorem 1]{McGee1989}  shows by a Dutch Book argument that in this case, a bet on $(A \to B) \wedge (C \to D)$ should yield a strictly positive partial return. Also \textcite[156]{sanfilippo2020probabilities} argue that we should classify the compound bet as winning. Indeed, to the extent that the sentence $(A \to B) \wedge (C \to D)$ is testable, it has been verified when $A$ is false, but $C$ and $D$ are true. All this suggests to treat the assertion $(A \to B) \wedge (C \to D)$ as true rather than indeterminate. Unlike Strong Kleene conjunction, quasi-conjunction 
%LOR: follows this line of reasoning in this case -->
allows us to model this line of reasoning. 
%LOR: I moved the above sentence here: it was at the end of the section, but it seemed disconnected from the rest of the last paragraph. 
%LOR: We could actually remove this bit on IE, since we say essentially the same in section 8. 

For all these reasons, we adopt quasi-conjunction, quasi-disjunction, Strong Kleene negation and the Cooper truth table for the conditional in the remainder of this paper. %LOR: I specified the language and a characterization of Cooper valuation, since we use them throughout the paper, but there was no characterization of them yet.
Our object-language is the language of propositional logic $\LL$, supplemented with a primitive conditional connective $\rightarrow$, and is indicated as $\LL^\to$. A \emph{Cooper valuation} is a function  $v: \LL^\to \mapsto \{0, \nicefrac{1}{2}, 1\}$ that assigns a semantic value to all sentences of $\LL^\to$ in agreement with the Cooper truth-tables, i.e. it interprets $\neg$ as the strong Kleene negation, $\wedge$ and $\vee$ as Cooper's quasi-conjunction and quasi-disjunction respectively, and $\rightarrow$ as Cooper's conditional. Note finally that all combinations of conditional and conjunctions surveyed in this section validate Import-Export: $(A \wedge B) \to C$ and $A \to (B \to C)$ are extensionally equivalent formulas.
%LOR: added footnote to address remark of referee no. 1. 

\section{Probability for Trivalent Propositions}\label{sec:ast}

Epistemologists capture the standing of a proposition $A$ by the \textit{probability} of $A$, reflecting the agent's evidence for and against $A$. When we identify propositions with sets of possible worlds, the probability of a proposition $A$ is the cumulative credence assigned to all possible worlds where $A$ is true. 

%LOR: mild rephrasing
Trivalent semantics for conditionals implements the same approach using a slight twist. As with bivalent probability, we start with a set of possible worlds $W$ with an associated algebra $\mathcal{A}$, and a weight or credence function $c: \mathcal{A} \to [0,1]$ defined on the measurable space ($W$, $\mathcal{A}$). This function represents the subjective plausibility of a particular element of the algebra, i.e., a set of possible worlds. Our use of possible worlds is devoid of metaphysical baggage and instrumental to define credence functions, as is customary in probabilistic semantics: for us, possible worlds are just Cooper valuations. \footnote{Notably, this does not make the interpretation of the conditional modal or non-truth-funtional: at each world $w$, the truth-value of $A \to C$ is given by a Cooper valuation.} Moreover, we assume that any algebra $\mathcal{A}$ includes the singletons of worlds, i.e., for every $w \in W$, $\{w\} \in \mathcal{A}$. Finally, we assume that the credence function $c$ is finitely additive with $c(\emptyset) = 0$, and $c(W) = 1$. 

%\footnote{The domain of $p$ is restricted to exclude the case in which $A$ takes only value $\nicefrac{1}{2}$ (as in $\bot \to \top$), since in this case the proposition is never true or false, and it is therefore unclear in which sense we could still talk of $A$ having a \textit{probability}, or making an assertion that can be more or less justified. Nevertheless, \emph{counterpossible conditionals} may be judged meaningful and true despite having a necessarily false antecedent. A potentially plausible natural language example is ``if 2+2=5, then 2+3=6''.
%LOR: Minor edits} 
%\amrand{PE}{Is it really correct to talk of odds? I would expect then that we compare $c(X_T)$ to $c(X_F)$ directly for odds---JS: Yes, see changes below} 

We now identify propositions with sentences of $\LL^\to$ and define a (non-classical) probability function $p: \mathcal{L}^\to \longmapsto [0,1]$, taking into account that sentences of $\LL^\to$ can receive three values: true, false, or indeterminate.\footnote{If you do not like to use the term ``probability'' in a non-classical framework, because you prefer to reserve it for standard bivalent probability, just replace it by ``degree of assertability'' or a similar term. This is the choice of \textcite{mcdermott1996truth}, whose definition is identical to ours. Also \textcite{cantwell2006laws} proposes the same definition of trivalent probability on the basis of different truth conditions.} For convenience, define 
\begin{align*}
A_T &= \{w \in W \mid v_w(A) = 1\} & A_I &= \{w \in W \mid v_w(A) = \half\} \\
A_F &= \{w \in W \mid v_w(A) = 0\}
\end{align*}
as the sets of possible worlds where $A$ is valued as true, false or indeterminate, relative to (Cooper) valuation functions $v_w: \LL^\to \mapsto \{0, \half, 1\}$, indexed by the possible worlds they represent. 
%JS: Put the w into the subscript of v_w (instead of v(A, w)): valuation functions have one, not two arguments, and they are isomorphic to possible worlds. 

In analogy to bivalent probability, we derive the probability of a (conditional) proposition $A$ from the {(conditional) betting odds} on $A$: how much more likely is a bet on $A$ to be won than to be lost? For this comparison, two quantities are relevant: (1) the cumulative weight of the worlds where $A$ is true (i.e., $c(A_T)$), and (2) the cumulative weight of the worlds where $A$ is false, i.e., $c(A_F)$). The \textit{decimal odds} on $A$ are $O(A) = (c(A_T)+c(A_F))/c(A_T)$, indicating the factor by which the bettor's stake is multiplied in case $A$ occurs and she wins the bet. Then we calculate the probability of $A$ from the decimal odds on $A$ by the familiar formula $p(A) = 1/O(A)$, yielding
\begin{equation}\label{eqn:ast}
p(A) := \frac{c(A_T)}{c(A_T) + c(A_F)} \qquad \text{if } \max(c(A_T), c(A_F)) > 0. \tag{Probability}
\end{equation}
Hence, the probability of a sentence corresponds to its expected semantic value, \textit{restricted to the worlds where the sentence takes classical truth value}. Additionally, we let $p(A) = 1$ whenever $c(A_T) + c(A_F) = 0$, i.e., if it is certain that $A$ takes the value $\half$ (e.g., when $A$ is $\bot \to \top$). 

In other words, the trivalent probability of $A$ is the ratio between the credence assigned to the worlds where $A$ is true, and the credence assigned to the worlds where $A$ has classical truth value. 
Worlds where $A$ takes indeterminate truth value are neglected for calculating the probability of $A$, except when they take up the whole space. For conditional-free sentences $A$ and their Boolean compounds, this corresponds to the classical picture since $W = A_T \cup A_F$, or equivalently, $A_I = \emptyset$.

The idea behind \eqref{eqn:ast} is the same that motivates classical operational definitions of probability: a proposition is assertable, or probable, \textit{to the degree that we can rationally bet} on it, i.e., to the degree that betting on this proposition will, in the long run, provide us with gains rather than losses  \parencite[e.g.,][]{SprengerHartmann2019}. This is a good reason for calling the object defined by equation \eqref{eqn:ast} a ``probability'', or a measure of the plausibility of a proposition. 

The structural properties of $p: \LL^\to \longmapsto [0,1]$ resemble the standard axioms of probability:
\begin{enumerate}
  \item[(1)] $p(\top) = 1$ and $p(\bot) = 0$. 
  \item[(2)] $p(A) = 1 - p(\neg A)$. 
  \item[(3)] $p(A \vee B) \le p(A) + p(B)$. The equality $p(A \vee B) = p(A) + p(B)$ holds if and only if $A_T \cap B_T = \emptyset$ and $A_I = B_I$.\footnote{The ``only if'' direction presupposes that $p(A) > 0$ and $p(B) > 0$.} 
\end{enumerate}
%While the first and second axioms mirror the standard Kolmogorov axioms, 
Just like standard probability, our trivalent probability is not additive, but \textit{subadditive}. Equality holds here exactly when $A$ and $B$ are incompatible and they take classical truth values in the same set of worlds. The main difference to the standard picture is that the probability of a conjunction can \textit{exceed} the probability of a conjunct. In other words, the ``and-drop'' inference from $X \wedge Y$ to $Y$ will not always preserve probability. 

However, on the betting interpretation of probability, this makes sense: when $A$ and $B$ are false and $C$ is true, the bet on $(A \to B) \wedge C$ yields a positive return, while the bet on $A \to B$ is called off. So we should not expect that in all circumstances $p((A \to B) \wedge C) \le p(A \to B)$, in notable difference to bivalent probability, and some non-classical probability functions \parencite[for a survey, see][]{Williams2016NCL}. Exactly the same phenomenon---the failure of ``and-drop'' in the context of conditional reasoning---was demonstrated in recent experiments by \textcite{SantorioWellwood2023}. Of course, $p(A \wedge B) \le p(A)$ will hold as long as $A$ and $B$ are conditional-free sentences.

On this definition of probability, we obtain for conditional-free sentences $A, C \in \LL$ that 
\begin{align*}%\label{eqn:TE}
p(A \to C) &= %\frac{c(A\to C)_T}{c(A\to C)_T+c(A\to C)_F} = 
\frac{c(A_T \cap C_T)}{c(A_T)} = \frac{p(A \wedge C)}{p(A)} = p(C|A) \tag{Adams's Thesis}
\end{align*}
as for conditional-free sentences, $p(X) = c(X_T)$, and because for bivalent $A$ and $C$, $\frac{c(A\to C)_T}{c(A\to C)_T+c(A\to C)_F}=\frac{c(A_T \cap C_T)}{c(A_T)}$. That is, instead of \textit{postulating} Adams's Thesis as a desideratum on the probability of a conditional, as in \textcite{Stalnaker1970} and \textcite[][3]{Adams1975}, we obtain it immediately from the semantics of trivalent conditionals, and the definition of probability as the inverse of rational betting odds.\footcite[For more discussion of Adams's Thesis, including experimental evidence for and against, see][]{Stalnaker1968,Adams1975,dubois1994conditional,DouvenVerbrugge2010,DouvenVerbrugge2013,EvansEtAl2007,OverEtAl2007,egre2011if,Over2016,SkovgaardOlsenEtAl2016a} The well-known triviality results by \textcite{Lewis1976} and others are blocked since they depend on an application of the (bivalent) Law of Total Probability, which does not hold for trivalent, non-classical probability functions \parencite{Lassiter2020}.\footnote{\textcite{Bradley2000} proposes a different triviality result: arguably we want indicative conditionals to satisfy the Preservation Condition---if $p(A) > 0$ and $p(C) = 0$, then $p(A \to C) = 0$ ---, but for this to hold in full generality, we need to posit strong logical dependencies between a conditional and its components, thus trivializing the conditional. This is indeed so for bivalent accounts, but our trivalent account implies the Preservation Condition as a theorem without having a vicious dependency between the truth values of $A$, $C$ and $A \to C$.} Equipped with a definition of probability, we now proceed to characterizing logical consequence relations for certain and uncertain inference.

%Bradley's quote: ``Compelling though the Preservation Condition may be, there is in fact no Boolean semantic theory for indicative conditionals that can ensure that it is satisfied, on the reasonable assumption that neither indicative conditionals nor their antecedents generally imply their consequents.''
\begin{comment}
Secondly, logically stronger propositions have lower probabilities than the propositions which they imply in the following sense: 
\begin{quote}
If $A \models_{\C} B$ and $\neg B \models_{\C} \neg A$, then $p(A) \le p(B)$. 
\end{quote}   
This can be %reformulated as follows, and 
strengthened to an if-and-only-if \parencite[e.g.,][]{dubois1994conditional,Williams2016NCL}, as we will see shortly.
%\amrand{PE}{Got rid of the former Prop 1, since it was later repeated as Prop 4, this was awkward} OKAZ
%LOR: I used the "restatable" command to allow repetition of propositions in the appendix without continuing their enumeration. 
%\begin{restatable}{proposition}{propA}
%%\begin{proposition}
%\label{prop:nodrop}
%The following are equivalent: 
%%LOR: SUPERSEDED (I introduced notation above) \amrand{PE}{We have not introduced QCC as a notation, so clause 3 here is awkward}
%\begin{enumerate}
%  \item For all probability functions $p: \mathcal{L}^\to \longmapsto [0,1]$, $p(A) \le p(B)$,
%  \item $A \models_{\C} B$ and $\neg B \models_{\C} \neg A$,
%  \item $A \models_{\sf QCC/SS\cap TT} B$. 
%\end{enumerate}
%\end{restatable} 
In other worlds, probability is preserved whenever the conclusion is true in all worlds where the premise is true, and the premise is false in all worlds where the conclusion is false. (It is intuitively clear that the probability of the conclusion cannot be lower in this case.) %Thus, probability is preserved in reasoning from $A$ to $B$ if and only if $B$ is a ${\sf SS\cap{}TT}$-consequence of $A$. 
This property will be used for defining an intuitive logic of single-premise inference in the next section, and then be extended to multi-premise inference (which is more demanding). 

%COMMENT JAN: There are some degenerate cases where p(AC) = n.def, p(CS) = 1
\end{comment}

\section{Certain Inference}\label{sec:TT}

%\textcolor{olive}{Premises of an inference can be certain or uncertain, and this affects the way we reason with them. Uncertain reasoning with conditionals is arguably non-monotonic: when premises are less than fully certain, it does \textit{not} follow from ``if Alice goes to the party, Bob will'' that ``if Alice and Carol go to the party, Bob will''. Rather, it depends on how strongly Bob dislikes Carol \parencite[][]{Adams1965,Lewis1973}. On the other hand, the inference appears sound if the premise is certain, i.e., Alice's presence ensures Bob's presence. This section explicates the second type of inference.} 

%Mathematicians and logicians accept principles such as Conditional Proof: from $A$ and $B$ it follows that $C$; hence, it follows from $A$ that if $B$, then $C$. 

%Also natural language inferences involving conditionals, such as from ``Either Alice or Bob will go to the party'' to  ``if Alice does not go to the party, Bob will'' appear deductively valid, i.e., truth-preserving. Deductive reasoning is typically assumed to be \textit{monotonic}: adding a premise such as ``Bob is not enthusiastic about the party'' does not compromise the validity of the inference. 

%By contrast, uncertain reasoning with conditionals is essentially \textit{non-monotonic}. 

%How can we account for this difference in a theory of conditionals? 

For a conditional-free propositional language $\LL$ with only two truth values, valid inferences preserve the truth of the premises---or equivalently, they preserve \textit{certainties}  \parencite[i.e., probability one:][]{Leblanc1979}. In a trivalent setting, however, there is no canonical notion of ``truth preservation'': it could amount to preserving strict truth (i.e., semantic value 1), to preserving non-falsities (i.e., semantic value greater than 0), or to a combination of both. It is simply not clear what valid inference amounts to. But there is a canonical extension of \textit{certainty-preserving inference} to $\LL^\to$: whenever all premises have probability one, as defined in the previous section, the conclusion should have probability one, too. We call this logic {\C} like ``inference with certain premises''. Formally:  
%LOR: minor editing
\begin{definition}[Valid Inference in $\C$]
For any set of formulas $\{\Gamma, B\} \subseteq \LL^\to$, the inference from $\Gamma$ to $B$ is $\C$-valid, in symbols $\Gamma \models_{\C} B$, if and only if for all probability functions $p: \mathcal{L}^\to \longmapsto [0,1]$: if $p(A) = 1$ for all $A \in \Gamma$, then also $p(B) = 1$.
\end{definition}
In its spirit, this definition of logical consequence is similar to theories of conditional inference based on preserving acceptability in context  \parencite[e.g.,][]{Gillies2009,Santorio2022path,Santorio2022PPR}---probability 1 is just a specific way of expressing which propositions are \textit{accepted}, and valid inference amounts to preservation of (full) acceptance \parencite[e.g.,][271]{stalnaker1975indicative}. In fact, the properties of $\C$ largely agree with Santorio's preferred system (though not with Gillies's)---but without the limitation to a language involving at most simple conditionals.

Based on the probabilistic characterization of the logic of certain inference {\C}, we can derive which trivalent logic corresponds to it: an inference is \C-valid if and only if non-falsity is preserved in passing from $\Gamma$ to $B$. Equivalently, we cannot assign a designated value ($1$ or $\half$) to the premises without assigning it to the conclusion, too. This is the main result of this section. 
%LOR: minor editing
\begin{restatable}[Trivalent Characterization of \C]{proposition}{propB}
\label{prop:TT}
For any set $\{\Gamma, B\} \subset \LL^\to$, $\Gamma \models_{\C} B$ if and only if for all Cooper valuations $v: \LL^\to \longmapsto \{0, \half, 1\}$:  
\[
\mbox{for every } A \mbox{ in } \Gamma, \mbox{ if } v(A) \ge \half, \mbox{ then } v(B) \ge \half.
\]
%or equivalently, 
%\[
%v(\phi) = 0 \; \Rightarrow \exists X \in \Gamma: v(\psi) = 0. 
%\]
\end{restatable}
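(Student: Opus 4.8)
The plan is to prove both directions of the biconditional by relating the trivalent (non-falsity-preservation) condition to the probabilistic (certainty-preservation) condition, exploiting the flexibility in choosing the credence function $c$ on the space of Cooper valuations.

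For the direction from trivalent non-falsity-preservation to $\C$-validity, I would argue contrapositively at the level of probabilities but working pointwise. Suppose the trivalent condition holds: for every Cooper valuation $v$ and every $A \in \Gamma$, $v(A) \ge \half$ implies $v(B) \ge \half$. Take any probability function $p$ with $p(A) = 1$ for all $A \in \Gamma$; I must show $p(B) = 1$. The key observation is to unpack what $p(A) = 1$ means via \eqref{eqn:ast}: either $c(A_F) = 0$ (so the false-worlds carry no weight), or $c(A_T) + c(A_F) = 0$ (the all-indeterminate case, which also gives $p = 1$ by the stipulation). In both cases $c(A_F) = 0$, i.e. the set of worlds where $A$ is false has credence zero. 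So $p(A) = 1$ for all $A \in \Gamma$ forces $c\bigl(\bigcup_{A \in \Gamma} A_F\bigr) = 0$ (finite additivity, or at least subadditivity for the relevant finite collection), meaning that almost every world (in the $c$-measure sense) assigns value $\ge \half$ to every premise simultaneously. By the trivalent hypothesis, every such world assigns $v(B) \ge \half$, hence $B_F$ is contained, up to a $c$-null set, in $\bigcup_{A} A_F$, so $c(B_F) = 0$. Then $p(B) = c(B_T)/(c(B_T)+c(B_F)) = 1$ (or $p(B) = 1$ by stipulation if the denominator is zero). This gives $\C$-validity.

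For the converse, I would prove the contrapositive: if the trivalent condition fails, construct a specific probability function witnessing failure of $\C$-validity. Suppose there is a Cooper valuation $w$ with $w(A) \ge \half$ for all $A \in \Gamma$ but $w(B) = 0$. Define the credence function $c$ to be the point mass concentrated on the single world $w$, i.e. $c(\{w\}) = 1$ and $c$ zero elsewhere; this is legitimate since the algebra contains all singletons. Under this $c$, for each premise $A$ we have $A_F$ carrying no weight (since $w \notin A_F$), so $p(A) = 1$ (using the stipulation if $w(A) = \half$, since then $c(A_T) + c(A_F) = 0$). Meanwhile $B_T = \emptyset$ under $c$ while $w \in B_F$, so $c(B_T) = 0$ and $c(B_F) = 1$, giving $p(B) = 0 \ne 1$. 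Hence $\Gamma \not\models_{\C} B$, as required.

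**Main obstacle.**
The delicate point throughout is the boundary behavior of \eqref{eqn:ast}, specifically the case where a premise takes value $\half$ at a credence-bearing world: here $p$ is defined not by the ratio but by the separate stipulation $p(A) = 1$ when $c(A_T)+c(A_F)=0$. The argument must handle these "fully indeterminate" premises uniformly with the genuinely-true ones, and in the point-mass construction this is exactly why concentrating mass on a world with $w(A) = \half$ still yields $p(A) = 1$. Getting this bookkeeping right — and confirming that "$c(A_F)=0$ for each premise" is the correct common reformulation of "$p(A)=1$" across both clauses of the definition — is where I would be most careful, since a naive ratio-only reading of \eqref{eqn:ast} would mishandle the indeterminate premises and break the equivalence.
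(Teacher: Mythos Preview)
Your proposal is correct and takes essentially the same approach as the paper: the paper also argues the trivalent-to-probabilistic direction via the containment $B_F \subseteq \bigcup_{A \in \Gamma} A_F$ (reducing multi-premise to single-premise via quasi-conjunction) together with $c(A_F)=0$, and proves the converse by putting a point mass on a single witnessing valuation. If anything, you are more careful than the paper about the ``all-indeterminate'' boundary clause of \eqref{eqn:ast}, which the paper's calculation $p(A) = c(A_T)/(c(A_T)+0) = 1$ glosses over when $w(A)=\half$.
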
 
\noindent In other words, {\C} preserves truth in the (weak) sense that we cannot infer a false conclusion from a set of non-false premises. Equivalently, if the conclusion is false, one of the premises must have been false. We have thus established an analogous result to the equivalence between truth-preserving and certainty-preserving inference in standard propositional logic. 

\C{} satisfies principles such as $B \models_{\C} A \to B$, i.e., if we are certain that Bob comes to the party, then we are also certain that Bob comes to the party if Alice does. While this inference is fallacious when premises are uncertain, it is valid in any context where we have \textit{verified} the premise---either empirically or by mathematical proof.\footnote{This behavior of the conditional is similar to the conditional developed in state space semantics, e.g., by \textcite{Leitgeb2017}.} We also have Conditional Proof and other characteristic principles of deductive reasoning in \C, such as Modus Ponens, Modus Tollens and the Law of Identity ($\models_{\C} A \to A$). On the other hand, problematic inferences such as the inference from $\neg A$ to $A \to B$ are blocked. Finally, the laws of classical logic in the conditional-free language $\LL$ (=the Boolean fragment of $\LL^\to$) are also theorems of {\C}, if we restrict ourselves to bivalent valuations. \footnote{{\C} is a paraconsistent logic almost equivalent to Cooper's---his propositional logic of Ordinary Discourse---except that we do not restrict {\C} to bivalent valuations. \textcite{ERS2021JPLa}, who study the entire family of trivalent consequence relations and provide a different argument in favor of {\C}, call it QCC/TT.}

{\C} retains Disjunctive Syllogism ($A \vee B, \neg A \models B$), but gives up Disjunction Introduction ($A \models A \vee B$). However, the counterexample necessarily involves the semantic value $\nicefrac{1}{2}$: when we restrict ourselves to classical \textit{valuations} of atomic sentences, the only invalid instances of $A \models A \vee B$ occur when $A$ is itself a conditional with a false antecedent. This shows that exceptions to the otherwise intuitive rule of Disjunction Introduction addition are quite modest; in fact, \textcite{SantorioWellwood2023} present theoretical and empirical arguments why Disjunction Introduction \textit{should} fail in these circumstances. 

Finally, characterizing {\C} as preserving two designated semantic values ($D = \{1, \half\}$) is not only of theoretical interest, but greatly simplifies the study of this logic: for deciding theorems and valid inferences it suffices to look at the truth tables. Section \ref{sec:principles} studies the theorems and valid inferences in more detail and compares certain inference with \C{} to uncertain inference where instead of certainty, high probability is preserved. Notably, these properties depend on interpreting the conditional using the Cooper truth table: if we had instead paired the de Finetti truth table with preserving non-falsity, we would have lost Modus Ponens---arguably a substantial drawback for a logic that generalizes deductive logic to certain inference with conditionals.  

At this point, the reader may ask what would have happened if we had adopted \textit{strict} truth preservation (i.e., preservation of semantic value 1) as the condition for logical consequence. This logic, let us call it $\models_{\PP}$, preserves strictly positive probability in passing from the premise to the conclusions: 
%the consequence relation of the Cooper SS-logics in the previous section can be represented as preserving strictly positive probability: 
\begin{restatable}[Characterization of Possibility-Preserving Inference]{proposition}{propC}
\label{prop:SS}
Suppose $A, B \in \LL^\to$ and there exists at least one probability function where $p(B) < 1$. Then the following two characterizations of $A \models_{\PP} B$ are equivalent: 
\begin{enumerate}
  \item For all Cooper valuations $v: \LL^\to \longmapsto \{0, \half, 1\}$ such that $v(A) = 1$, it is also the case that $v(B) = 1$.%\footnote{This logic is called QCC/SS in the classification system proposed by \textcite{ERS2021JPLa}.} 
  \item For all credence functions $c: \mathcal{A} \longmapsto \mathbb{R}$ with $c(A_I) < 1$ and associated probability function $p: \LL^\to \longmapsto [0,1]$: if $p(A) > 0$, then $p(B) > 0$.
\end{enumerate}
%JS: GENERALIZATION OMITTED FOR SAKE OF SIMPLE PRESENTATION. More generally, $A_1, A_2, \ldots, A_n \models_QCC/SS B$ if and only if for all credence functions $c: \mathcal{A} \to \mathbb{R}$ with associated probability functions $p: \LL^\to \longmapsto [0,1]$ such that $c((A_i)_I) < 1$ for at least one $1 \le i \le n$, and $c(B_I) < 1$: if $p(\bigwedge A_I) > 0$, then $p(B) > 0$.
%LOR: OLD VERSION $A \models_{\sf CC/SS} B$ if and only if for all probability functions $p: \mathcal{L}^\to \longmapsto [0,1]$: if $p(A) > 0$, then also $p(B) > 0$.
%LOR: SUPERSEDED \amrand{LR}{``weak tautology'' has not yet been explained}
\end{restatable}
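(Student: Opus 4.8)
The plan is to prove $1\Leftrightarrow 2$ directly, after one simplifying observation and with a case split in the harder direction. The key preliminary remark is that, because $A_T,A_F,A_I$ partition $W$ and $c(W)=1$, the standing assumption $c(A_I)<1$ gives $c(A_T)+c(A_F)=1-c(A_I)>0$. Hence the exceptional clause in \eqref{eqn:ast} (which sets $p=1$ when $c(A_T)+c(A_F)=0$) never fires for $A$, so $p(A)=c(A_T)/(c(A_T)+c(A_F))$ and, for every admissible $c$, $p(A)>0$ iff $c(A_T)>0$.

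For $1\Rightarrow 2$ I would argue directly. Fix $c$ with $c(A_I)<1$ and suppose $p(A)>0$, hence $c(A_T)>0$. Characterization 1 says exactly that $A_T\subseteq B_T$ (every Cooper valuation sending $A$ to $1$ sends $B$ to $1$). Monotonicity of $c$ yields $c(B_T)\ge c(A_T)>0$, so $c(B_T)+c(B_F)>0$ and $p(B)=c(B_T)/(c(B_T)+c(B_F))>0$, establishing 2.

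For $2\Rightarrow 1$ I would use the contrapositive and build an explicit counterexample credence function. Suppose 1 fails: some Cooper valuation $w^*$ has $v_{w^*}(A)=1$ but $v_{w^*}(B)\neq 1$. If $v_{w^*}(B)=0$, take the point mass $\delta_{w^*}$ concentrated on $w^*$; then $c(A_I)=0<1$ and $c(A_T)=1$, so $p(A)=1>0$, while $c(B_T)=0$ and $c(B_F)=1$ give $p(B)=0$, so 2 fails. If instead $v_{w^*}(B)=\half$, a point mass at $w^*$ is useless, since then $c(B_T)+c(B_F)=0$ triggers the convention $p(B)=1$. This is exactly where the extra hypothesis is needed: ``$p(B)<1$ for some probability function'' is equivalent to $B_F\neq\emptyset$ (if $B_F=\emptyset$ then $c(B_F)=0$ always, forcing $p(B)=1$; conversely a point mass on any $w'\in B_F$ gives $p(B)=0$). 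So choose a Cooper valuation $w'$ with $v_{w'}(B)=0$ and set $c=\alpha\,\delta_{w^*}+(1-\alpha)\,\delta_{w'}$ for any $\alpha\in(0,1)$. Then $w^*\notin A_I$ gives $c(A_I)\le 1-\alpha<1$ and $w^*\in A_T$ gives $c(A_T)\ge\alpha>0$, so $p(A)>0$; and since neither $w^*$ nor $w'$ lies in $B_T$ we get $c(B_T)=0$ with $c(B_F)\ge 1-\alpha>0$, so $p(B)=0$. Again 2 fails.

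I expect the only real obstacle to be the $v_{w^*}(B)=\half$ subcase: one must notice that the convention $p=1$ for almost-surely-indeterminate propositions would otherwise sabotage the counterexample, and then recognize that the proposition's hypothesis is precisely the device that supplies a world where $B$ is false, letting one dilute the point mass at $w^*$ while keeping $p(A)$ positive and $c(A_I)<1$. The remaining steps are routine bookkeeping with the partition $W=B_T\cup B_F\cup B_I$ and the definition of $p$.
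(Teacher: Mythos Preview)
Your proof is correct and follows essentially the same approach as the paper's. The paper's version is slightly terser: instead of splitting on whether $v_{w^*}(B)=0$ or $v_{w^*}(B)=\half$, it directly picks any $w\in A_T\setminus B_T$ together with some $w'\in B_F$ (guaranteed by the hypothesis, which the paper rephrases as ``$B$ is not a theorem of {\C}'') and assigns each weight $\nicefrac{1}{2}$, collapsing to a single point mass when $w=w'$. Your explicit case split and the general weight $\alpha\in(0,1)$ make the role of the hypothesis more transparent, but the underlying construction is the same two-point credence.
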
 
In other words, $A \models_{\PP} B$ if and only if $B$ is a real possibility  (i.e., $p(B) > 0$) in all probability functions that make $A$ a real possibility. While {\C} preserves non-falsity and probabilistic certainties, {\PP} preserves strict truth and probabilistic possibilities \parencite[see also][]{Adams1996Preservation}.\footnote{This logic is called QCC/SS in the classification system proposed by \textcite{ERS2021JPLa}.} Therefore it also satisfies characteristic principles of (conditional) possibility logic, such as the inference from $A \to B$ to $B \to A$. The fact that it satisfies such principles (and fails plausible theorems such as $\models A \to A$) is also a good argument why preservation of (strict) truth is not an adequate consequence relation for reasoning with conditionals. We now move to the main contribution of this paper: developing an account of non-monotonic reasoning with conditionals when premises are uncertain.

\section{Uncertain Inference}\label{sec:LA}

%JS: I killed this para (located after the first sentence of the section) since it is more of interest for logicians than for philosophers. 
%This is an instance of the property of structural monotonicity ($A, B\models A$), paired with the fact that the conditional satisfies conditional introduction. Indeed, the logic {\C} satisfies both properties, and it validates this inference, called True Consequent: $B \models_{\C}  A \to B$.\footnote{Alternatively, we can rephrase this example as a meta-inference from $\Gamma \models B$ to $\Gamma, A \models B$, which is the same as $\Gamma \models A \to B$ since {\C} satisfies Conditional Proof.} 

%LOR: minor rephrasing
Certain inference with conditionals is arguably monotonic: when we know $B$ for certain, or when we suppose it as holding no matter what, we also know that $B$ is the case under the condition that $A$. However, when we move to \textit{uncertain} inference, where only high probability or degree of assertability is preserved, things change. 
%LOR: I've put only "things change", because otherwise we give two distinct characterizations of non-monotonicity. 
We may accept, assert, or find plausible $B$, but reject $B$ under the condition that $A$. For example, the conditional ``if Real Madrid faces Juventus in their next match, then Real Madrid will win'' sounds highly plausible, whereas ``if Real Madrid faces Juventus in their next match but most of their players are sick, then Real Madrid will win'' seems much less plausible. A logic of inference with \textit{uncertain} premises {\U} should therefore, unlike the logic {\C},  be non-monotonic, i.e., we cannot infer from $A \to C$ that $A \wedge B \to C$ for any $A$, $B$ and $C \in \LL^\to$.\footnote{The structural rule of Weakening (that is, inferring $A, B \models C$ from $A \models C$) will remain valid in our logic {\U}. However, the rule $\neg A \models A \rightarrow B$ fails in it, for the same reasons that make it fail in \C.} 

The canonical definition of validity for single-premise inference in a logic of uncertain inference preserves probability, as a proxy for rational acceptance or assertability \parencite[e.g.][]{Adams1975}. In other words, the probability of the premise $A$ must never exceed the probability of the conclusion $B$. Almost all logics of uncertain reasoning agree on this criterion for single-premise inference, which is the natural analogue of truth preservation in certain reasoning. We therefore adopt it as our definition of single-premise logical consequence in uncertain reasoning:

\begin{definition}[Valid Single-Premise Inference in $\U$]
For formulas $A, B \in \LL^\to$: $A\models_{\U} B$ if and only if $p(A) \le p(B)$ for all probability functions $p: \LL^\to \longmapsto [0,1]$ based on credence functions $c: \mathcal{A} \longmapsto \R^{\ge 0}$. 
\end{definition}
\begin{corollary}
$\models_{\U} B$ if and only $p(B) = 1$ for all probability functions $p: \LL^\to \longmapsto [0,1]$ based on credence functions $c: \mathcal{A} \to \R^{\ge 0}$. 
\end{corollary}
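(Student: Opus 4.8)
The plan is to derive this corollary directly from the preceding Definition of single-premise validity in $\U$, by specializing it to the case where the premise set is empty. The statement $\models_{\U} B$ means that $B$ follows from no premises; in the probabilistic reading of $\U$, the natural interpretation is that $B$ must have maximal probability under every admissible probability function. So the corollary is essentially an unpacking of what ``valid inference with an empty antecedent'' comes to, and the work is to confirm that the empty-premise case of the defining inequality reduces exactly to $p(B) = 1$ for all $p$.

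First I would fix the interpretation of the empty-premise inference. In the Definition, $A \models_{\U} B$ requires $p(A) \le p(B)$ for all admissible $p$. When there is no premise, the analogue is the vacuous lower bound: since $\top$ is the natural ``empty conjunction'' of premises and $p(\top) = 1$ by structural property (1) established in Section \ref{sec:ast}, the condition $\models_{\U} B$ should read as $p(\top) \le p(B)$, i.e. $1 \le p(B)$, for every admissible $p$. I would make this identification explicit, treating $\models_{\U} B$ as $\top \models_{\U} B$, so that the single-premise Definition applies verbatim with $A = \top$.

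The two directions then fall out immediately. For the forward direction, assume $\models_{\U} B$, i.e. $\top \models_{\U} B$. By the Definition, $p(\top) \le p(B)$ for all probability functions $p$ based on credence functions $c: \mathcal{A} \to \R^{\ge 0}$. Since $p(\top) = 1$ and $p$ maps into $[0,1]$, we get $1 \le p(B) \le 1$, hence $p(B) = 1$ for every such $p$. For the converse, suppose $p(B) = 1$ for all admissible $p$; then trivially $p(\top) = 1 = p(B)$, so $p(\top) \le p(B)$ holds for every $p$, which is precisely $\top \models_{\U} B$, i.e. $\models_{\U} B$.

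The only genuine point requiring care, and the step I would flag as the main obstacle, is the justification that an empty premise set behaves like the premise $\top$ under the $\U$ consequence relation, rather than being a mere notational convention. This rests on the structural property $p(\top) = 1$ (property (1) of Section \ref{sec:ast}) holding uniformly across all credence functions $c: \mathcal{A} \to \R^{\ge 0}$, and on the class of admissible probability functions being the same in the Definition and the Corollary. Both hold by construction here, so once the identification of $\models_{\U}$ with $\top \models_{\U}$ is granted, the argument is a two-line sandwich using $p(B) \in [0,1]$. No nonmonotonicity or trivalent machinery is needed: this is purely a consequence of the probabilistic definition and normalization.
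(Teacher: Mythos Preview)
Your proposal is correct and matches the paper's approach: the paper states this corollary immediately after the single-premise definition without giving an explicit proof, treating it as an immediate consequence via the standard convention that zero-premise inference is inference from $\top$ (with $p(\top)=1$). Your argument just spells out this convention and the resulting sandwich $1 = p(\top) \le p(B) \le 1$, which is exactly the intended reasoning.
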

\begin{corollary}
{\C} and {\U} have the same theorems. 
\end{corollary}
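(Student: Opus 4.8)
The plan is to collapse both ``$\models_{\C} B$'' and ``$\models_{\U} B$'' into the single statement ``$p(B)=1$ for every admissible probability function $p$'', and then to check that the two notions of ``admissible'' coincide.

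First I would unfold the definition of a theorem of $\C$. A theorem is the instance $\Gamma = \emptyset$ of $\Gamma \models_{\C} B$, and in that instance the premise clause ``$p(A)=1$ for all $A \in \Gamma$'' is vacuously satisfied. Hence the definition of $\C$-validity yields directly that $\models_{\C} B$ holds if and only if $p(B)=1$ for every probability function $p$. Next I would simply invoke the preceding Corollary, which already states that $\models_{\U} B$ holds if and only if $p(B)=1$ for every probability function $p$ based on a credence function $c\colon \mathcal{A} \to \R^{\geq 0}$. At this point both $\models_{\C} B$ and $\models_{\U} B$ have been rewritten as ``$p(B)=1$ for all $p$'', so it only remains to verify that the two ranges over which $p$ is quantified agree.

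This last verification is where the only genuine subtlety lies, because $\C$ is phrased with normalized credences ($c(W)=1$) while $\U$ admits arbitrary non-negative credences $c\colon\mathcal{A}\to\R^{\geq 0}$. I would dispose of it using the scale invariance of the ratio in \eqref{eqn:ast}: multiplying $c$ by any positive constant leaves $p$ unchanged, so every probability function arising from a credence with $c(W)>0$ already arises from the normalized credence $c/c(W)$, which is exactly the kind used in the definition of $\C$. The only strictly additional case allowed by $\U$ is the degenerate credence $c\equiv 0$ (that is, $c(W)=0$), for which the convention following \eqref{eqn:ast} forces $p\equiv 1$; such a $p$ satisfies $p(B)=1$ automatically and so imposes no extra constraint. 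Therefore the two quantifications ``for all $p$'' range over the same probability functions (up to a degenerate case that is vacuously harmless), the two characterizations are literally the same condition on $B$, and $\models_{\C} B \iff \models_{\U} B$.

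I expect no serious obstacle: once the preceding Corollary is in hand, the argument is essentially bookkeeping. The one point requiring care is precisely the normalization remark above, namely confirming that widening the credences from $c(W)=1$ to $c\colon\mathcal{A}\to\R^{\geq 0}$ does not smuggle in any new probability function that could falsify a would-be theorem of $\C$; scale invariance of \eqref{eqn:ast} together with the handling of the zero-credence case settles exactly this.
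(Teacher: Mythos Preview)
Your proposal is correct and follows the same immediate route the paper intends: the paper states Corollary 2 without proof because it is meant to be an instant consequence of the preceding Corollary together with the vacuous-premise instance of the definition of $\models_{\C}$, exactly as you argue. One remark: your normalization discussion, while harmless and correctly handled, is unnecessary in the paper's setup, since Section~\ref{sec:ast} fixes once and for all that credence functions satisfy $c(W)=1$; the codomain $\R^{\ge 0}$ in the definition of $\U$ is just notation for non-negativity, not a relaxation of that assumption, so the two classes of probability functions already coincide on the nose.
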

\noindent It is easy to show that this inference criterion has the following characterization in trivalent logic: 
% \parencite[see also][31]{mcdermott1996truth}
%\footnote{%LOR: Minor edits The only case where no assertability is when both $A$ and $C$ always have value $\nicefrac{1}{2}$ (e.g. they are both of the form $\bot \rightarrow B$). In this case, we simply stipulate that $\bot \rightarrow B \models_{\U} C$ if and only if $p(C) = 1$ for all credence functions $c$ and $A \models_{\U} \bot \rightarrow B$ for all $A$, including the case $\bot \rightarrow B \models_{\U} \bot \rightarrow B$.}   
%JS: This footnote does not convince me. 
%LOR: here and below, rephrased as propositions for uniformity (as these are results and not definitions). 
%LOR: Uniformed with statement in the Appendix. Mild re-phrasing. 
%LOR: More rephrasing might be required. See email. 
\begin{restatable}[Equivalent Characterizations of Valid Single-Premise Inference in {\U}]{proposition}{propD}
\label{prop:SSTT1}
%Suppose $B$ is not a logical validity of {\C}, i.e., $\not \models_{\C} B$. 
For $A, B \in \LL^\to$, the following are equivalent: 
\begin{enumerate}
  \item[(1)] $A\models_{\U} B$
  %For all probability functions $p:\LL^\to \longmapsto [0,1]$, $p(A) \le p(B)$;
  \item[(2)] For all Cooper valuations $v: \LL^\to \longmapsto \{0, \half, 1\}$, $v(A) \le v(B)$, or $\models_{\C} B$. In other words, if $v(A) = 1$ then $v(B) = 1$, and if $v(A) \ge \half$, then $v(B) \ge \half$.
  \item[(3)] $A \models_{\C} B$ and $A \models_{\PP} B$, or $\models_{\C} B$;
  \item[(4)] $A \models_{\C} B$ and $\neg B \models_{\C} \neg A$, or $\models_{\C} B$;
\end{enumerate}
%LOR: OLD VERSION HERE BELOW (SUPERSEDED)
%$A \models_{\sf A} C$ if and only if $\models_{\sf Q} C$, or : 
%  \begin{enumerate}
%    \item[$(1)$] $A \models_{\sf QCC/SS\cap{}TT}  C$; or equivalently,
%    \item[$(2)$] $A \models_{\sf Q} C$ and $\neg C \models_{\sf Q} \neg A$. 
%  \end{enumerate} 
%LOR: SUPERSEDED: Added the extra condition that $C$ is a theorem of {\sf Q} (probably needs checking). 
\end{restatable}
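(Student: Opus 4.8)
The plan is to split the four-way equivalence into a purely truth-functional core relating (2), (3) and (4), and a probabilistic bridge relating (1) and (2). For the core, I would first record that, since the truth values are linearly ordered in $\{0,\half,1\}$, the condition ``$v(A)\le v(B)$ for all $v$'' decomposes into the conjunction of ``$v(A)=1\Rightarrow v(B)=1$ for all $v$'' and ``$v(A)\ge\half\Rightarrow v(B)\ge\half$ for all $v$'', which by the single-premise instances of Propositions~\ref{prop:SS} and~\ref{prop:TT} are exactly $A\models_{\PP}B$ and $A\models_{\C}B$. Carrying the shared disjunct $\models_{\C}B$ along, this yields (2)$\iff$(3). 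Then (3)$\iff$(4) follows from the single fact that $A\models_{\PP}B$ and $\neg B\models_{\C}\neg A$ are the same condition: since Strong Kleene negation gives $v(\neg X)=1-v(X)$, we have $v(\neg X)\ge\half \iff v(X)\le\half$, so ``$v(\neg B)\ge\half\Rightarrow v(\neg A)\ge\half$ for all $v$'' contraposes to ``$v(A)=1\Rightarrow v(B)=1$ for all $v$''. Thus (2), (3) and (4) are interderivable by inspection of the truth tables.

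For the bridge (2)$\Rightarrow$(1), I would argue by cases on the disjunction in (2). If $\models_{\C}B$, then $v(B)\ge\half$ at every world, so $B_F=\emptyset$ under every credence function and hence $p(B)=1\ge p(A)$ for all $p$, directly from \eqref{eqn:ast} and the convention $p=1$ when $c(B_T)+c(B_F)=0$, giving $A\models_{\U}B$. If instead $v(A)\le v(B)$ for all $v$, then pointwise $A_T\subseteq B_T$ and $B_F\subseteq A_F$, so writing $a_T=c(A_T)$, $a_F=c(A_F)$, $b_T=c(B_T)$, $b_F=c(B_F)$ we have $a_T\le b_T$ and $b_F\le a_F$. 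When all denominators are positive, $p(A)\le p(B)$ reduces after cross-multiplication to $a_Tb_F\le a_Fb_T$, which follows from $a_Tb_F\le b_Tb_F\le b_Ta_F$. The edge cases are handled by the convention: if $c(A_T)+c(A_F)=0$ then $b_F\le a_F=0$ forces $c(B_F)=0$, whence $p(B)=1=p(A)$; and if $c(B_T)+c(B_F)=0$ then $p(B)=1\ge p(A)$ trivially.

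For the converse (1)$\Rightarrow$(2) I would argue contrapositively, using that the algebra contains all singletons so credence may be concentrated on individual Cooper valuations. Assume (2) fails: there is a valuation $v_0$ with $v_0(A)>v_0(B)$ and, since $\not\models_{\C}B$, a valuation $v_1$ with $v_1(B)=0$. The possible ``bad'' types for $v_0$ are $(v_0(A),v_0(B))\in\{(1,0),(\half,0),(1,\half)\}$. If $v_0(B)=0$, put all credence on $v_0$: then $c(B_T)=0<c(B_F)$ gives $p(B)=0$, while $p(A)=1$ (either $v_0(A)=1$, or $v_0(A)=\half$ with $p(A)=1$ by the convention), so $p(A)>p(B)$. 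If $v_0(B)=\half$, then $v_0(A)=1$; put positive credence on both $v_0$ and $v_1$, so that $c(B_T)=0$ while $c(B_F)=c(\{v_1\})>0$ gives $p(B)=0$, and $c(A_T)\ge c(\{v_0\})>0$ gives $p(A)>0$. In every case $p(A)>p(B)$, so $A\not\models_{\U}B$.

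The one genuinely delicate point is the role of the escape clause $\models_{\C}B$ in the completeness direction. When the only witnesses to $v_0(A)>v_0(B)$ have $v_0(B)=\half$ rather than $0$, the valuation $v_0$ alone cannot push $p(B)$ below $1$, because $B$ is never false there; it is precisely the failure of $\models_{\C}B$ that supplies an independent world $v_1$ falsifying $B$, and this is what the disjunct ``or $\models_{\C}B$'' in (2)--(4) encodes. The remaining care is just bookkeeping around the $p=1$ convention when a denominator in \eqref{eqn:ast} vanishes.
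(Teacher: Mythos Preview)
Your proposal is correct and follows essentially the same strategy as the paper: dispose of the $\models_{\C} B$ disjunct first, reduce (2)--(4) to the set inclusions $A_T\subseteq B_T$ and $B_F\subseteq A_F$ via the negation duality, derive $p(A)\le p(B)$ from those inclusions, and for the contrapositive build one- or two-point credence functions concentrated on a witnessing valuation together with a $B$-falsifying world supplied by $\not\models_{\C}B$. Your contrapositive case split by the type $(v_0(A),v_0(B))\in\{(1,0),(\half,0),(1,\half)\}$ is organized a bit differently from the paper's split on which of the {\sf SS}/{\sf TT} entailments fails (with a side case $A_T=\emptyset$), but the constructions coincide and your handling of the $(\half,0)$ case via the $p=1$ convention is arguably tidier.
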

%LOR: Minor edits
%Bracketing the case in which $B$ is a logical truth of \C, 
Condition (2) expresses that the semantic value of the conclusion must not fall below the semantic value of the premise in all possible valuations. By Proposition \ref{prop:TT} and Proposition \ref{prop:SS}, this is equivalent to the conjunction of $A \models_{\C} B$ and $A \models_{\PP} B$ (or $\neg B \models \neg A$), i.e., both certainties and possibilities are preserved.\footnote{\textcite{ERS2021JPLa} call this logic QCC/SS$\cap$TT since it preserves both strict and tolerant truth value (=both strict truths and non-falsities). This is one of the logics entertained in \textcite{belnap1973restricted}.} Thus, {\U} validates fewer inferences than {\C}. The proposition states that all these conditions are equivalent to demanding that the conclusion be at least as probable as the premise for all probability functions. 

%The validity of an inference $A \models_{\U} B$ can thus either be defined via quantifying over probability functions and weight assignments to possible models, or via verifying that the inference preserves certainty as well as possibility. 

%LOR: FOOTNOTE REMOVED BECAUSE SUPERSEDED BY THE APPENDIX. OLD VERSION BELOW: 
%\footnote{Proof sketch: 
%LOR: Minor edits
%If is immediate from Proposition \ref{prop:nodrop} that $C$ is a logical truth of ${\sf Q}$ if and only if $p(C) = 1$, and thus $p(A) \le p(C)$, so we can set this case aside. 
%LOR: This needs checking, but seems correct (just a special case of Proposition 2). Oder?
%$A \models_QCC/TT C$ means that the set of possible worlds where $C$ is false is a subset of the worlds where $A$ is false, and $A \models_QCC/SS C$ means that the set of possible worlds where $A$ is true is a subset of the worlds where $C$ is true. This means that $c(A_T) \le c(C_T)$ and $c(A_F) \ge c(C_F)$, and hence $p(A) \le p(C)$. Conversely, suppose that $C$ is not logical truth of ${\sf Q}$ and that one of the implications is not satisfied, e.g., there is a $w \in W$ such that $v_w(A) = 1$, $v_w(C) \le \half$. Assign maximal (=1-$\epsilon$) weight to this world in order to create a countermodel to $p(A) \le p(C)$.} 

Extending this criterion to multi-premise inference $\Gamma \models B$, for $\Gamma \subseteq \LL^\to$, is non-trivial. Should the probability of $B$ not fall below the minimum probability of the premises? Should it follow Adams's uncertainty preservation criterion \parencite{Adams1975,Adams1996Preservation}? Should $B$ be at least as plausible as the conjunction of the premises? 
%LOR: Intuitively, the choice is not clear and we believe that -->
Since there is no intuitively best candidate here, we believe that the choice should depend on the logical properties of the proposed criterion. We propose that $\Gamma \models_{\U} B$ if and only if for a subset $X \subseteq \Gamma$ of the premises, \textit{the probability of the (quasi-)conjunction of the elements of $X$ never exceeds the probability of the conclusion}, regardless of the choice of the probability function. Formally:
%\begin{description}
  %\item[Validity of Multi-Premise Inference in {\sf A}] $A_1, \ldots A_n \models_{\U} C$ if and only if for all additive functions $m: W \to [0,1]$, 
%  \begin{equation}\label{eqn:MPI}
%    p(A_1 \wedge \ldots \wedge A_n) \le p(B), \tag{Multi-Premise Inference}    
%  \end{equation} 
%where conjunction is interpreted as quasi-conjunction of the premises, as defined in Table \ref{tab:quasi}. 

\begin{definition}[Valid Multi-Premise Inference in $\U$]
For a set of formulas $\Gamma \subseteq \LL^\to$ and a formula $B \in \LL^\to$:
$\Gamma \models_{\U} B$ if and only if there is a finite subset of the premises $\Delta \subseteq \Gamma$ such that for all probability functions $p:\LL^\to \longmapsto [0,1]$,  $p(\bigwedge_{A_i \in \Delta} A_i) \le p(B)$.\label{eqn:MPI}%\tag{Multi-Premise Inference}    
\end{definition}
We define validity by means of existential quantification over (possibly improper) \textit{subsets} of $\Gamma$, in order to preserve the fact that a set of premises entails each of its members, namely $\Gamma \models_{\U} A$ for any $A \in \Gamma$ \parencite[compare][1729]{dubois1994conditional}. %\footnote{\pe{We thank D. Over for encouraging us to preserve this feature of the consequence relation.}}}
%is required for obtaining that a set of premises entails each of his members, i.e., that $\Gamma \models_{\U} A$ for any $A \in \Gamma$ \parencite[compare][1729]{dubois1994conditional}. This property is known as Reflexivity. 
If we required instead that the  quasi-conjunction of \textit{all} members of $\Gamma$ have lower probability than $B$, we would no longer have that $A, B \models_{\U} A$ for every $B$, despite the fact that $A\models_{\U} A$ for every $A$.\footnote{In other words, although the logic would remain reflexive, it would not be structurally monotonic. We are indebted David Over for discussion on this topic.}

%While the conditional in $\U$ is nonmonotonic, we find it simpler to keep to a structurally monotonic consequence relation.

%The condition according to which $\Gamma \vdash \Delta$ whenever $\Gamma$ and $\Delta$ share a sentence is sometimes called Reflexivity, for instance in \cite{scott1971engendering}. However, we prefer to define Reflexivity to be the tighter condition whereby $A\vdash A$ when no context (additional premises or conclusions) intervenes with $A$.}}

%the resulting logic would not be reflexive: as shown in Section \ref{sec:ast}, it is possible that $p(A \wedge B) > p(A)$.

There are also principled reasons for adopting this definition. First of all, Definition \ref{eqn:MPI} allows us to extend the equivalence between probabilistic inference and a trivalent consequence relation from the single-premise to the multi-premise case: 

%\amrand{PE}{Prop above is about probability functions, here we talk of additive functions, why?} CORRECTED
%LOR: turned into a proposition, same reason as above. 
%LOR: More rephrasing might be required. See email. 
\begin{restatable}[Equivalent Characterizations of Valid Multi-Premise Inference in {\U}]{proposition}{propE}
\label{prop:SSTT2}
%Suppose $C$ is not a logical validity of \C, i.e.,$\not\models_{\C} C$. Then 
For $\Gamma \subseteq \LL^\to$ and $B \in \LL^\to$, the following are equivalent:
  \begin{enumerate}
  \item[(1)] $\Gamma \models_{\U}  B$.
    %For all probability functions $p:\LL^\to \longmapsto [0,1]$,    $p(A_1 \wedge \ldots \wedge A_n) \le p(B)$.
       \item[(2)] Either  $\models_{\C} B$, or there is a finite subset of premises $\Delta \subseteq \Gamma$ such that the semantic value of $B$ is, for all Cooper valuations $v$, at least as high as the semantic value of the quasi-conjunction of the premises: $v(\bigwedge_{A_i \in \Delta} A_i) \le v(B)$. 
              % A_i \models_{\rm QCC/SS \cap TT} B$.\footnote{For the TT-logics, but not for the SS-logics, $\bigwedge_{A_i \in X} A_i \models  B$ is the same as $X \models B$. In particular, $A, B \models_{\rm QCC/SS} B$ holds, while $A \wedge B \models_{\rm QCC/SS} B$ does not, due to the choice of quasi-conjunction instead of Strong Kleene conjunction ($v(A) = 1$, $v(B) = \half$).}
    \item[(3)] Either  $\models_{\C} B$, or there is a finite subset of premises $\Delta \subseteq \Gamma$ such that $\bigwedge_{A_i \in \Delta} A_i \models_{\C} B$ and $\bigwedge_{A_i \in \Delta} A_i \models_{\PP} B$. 
    \item[(4)] Either  $\models_{\C} B$, or there is a finite subset of premises $\Delta \subseteq \Gamma$ such that $\bigwedge_{A_i \in \Delta} A_i \models_{\C} B$ and $\neg B \models_{\C} \bigvee_{A_i \in \Delta} \neg A_i$.
\end{enumerate}
\end{restatable}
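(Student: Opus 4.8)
The plan is to reduce the multi-premise statement to the single-premise Proposition \ref{prop:SSTT1}, which already carries the analytic weight. Write $A_\Delta := \bigwedge_{A_i \in \Delta} A_i$ for a finite $\Delta \subseteq \Gamma$; since quasi-conjunction is associative and commutative, $A_\Delta$ is a well-defined formula of $\LL^\to$ up to semantic equivalence. The key first observation is that Definition \ref{eqn:MPI} says precisely that $\Gamma \models_{\U} B$ holds iff there is a finite $\Delta \subseteq \Gamma$ with $p(A_\Delta) \le p(B)$ for all probability functions $p$ — and this inner condition is, verbatim, the single-premise judgment $A_\Delta \models_{\U} B$. Hence statement (1) is equivalent to: there exists a finite $\Delta \subseteq \Gamma$ such that $A_\Delta \models_{\U} B$.

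Next I would apply Proposition \ref{prop:SSTT1} with premise $A_\Delta$ and conclusion $B$. This replaces $A_\Delta \models_{\U} B$ by any of its three equivalent forms, call them (2s), (3s), (4s), each of the shape $P(\Delta) \lor Q$ where the disjunct $Q$ is the fixed, $\Delta$-independent judgment $\models_{\C} B$: for instance (2s) is ``for all Cooper valuations $v(A_\Delta) \le v(B)$, or $\models_{\C} B$'', and (3s) is ``$A_\Delta \models_{\C} B$ and $A_\Delta \models_{\PP} B$, or $\models_{\C} B$''. So (1) becomes the existence of a finite $\Delta$ satisfying one of these. The only bookkeeping that needs care is the commutation $\exists \Delta\,(P(\Delta) \lor Q) \iff (\exists \Delta\, P(\Delta)) \lor Q$, which is valid because $Q$ does not depend on $\Delta$ and the family of finite subsets of $\Gamma$ is nonempty (it contains $\emptyset$). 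The right-hand side is exactly the multi-premise conditions (2) and (3). Conceptually, the stand-alone disjunct $\models_{\C} B$ is nothing but the witness $\Delta = \emptyset$: then $A_\emptyset = \top$ and $A_\emptyset \models_{\U} B$ reduces to $\models_{\U} B$, i.e.\ $\models_{\C} B$ by the Corollary, which is a useful sanity check on the whole scheme.

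For (4) one extra step is needed: I would rewrite $\neg A_\Delta = \neg \bigwedge_{A_i \in \Delta} A_i$ as $\bigvee_{A_i \in \Delta} \neg A_i$ via the de Morgan laws, which hold for Cooper's quasi-conjunction and quasi-disjunction. Since these two formulas take the same value under every Cooper valuation, and $\models_{\C}$ depends only on semantic values by Proposition \ref{prop:TT}, the judgments $\neg B \models_{\C} \neg A_\Delta$ and $\neg B \models_{\C} \bigvee_{A_i \in \Delta} \neg A_i$ are interchangeable; applying the same quantifier commutation then yields multi-premise (4).

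I do not expect a deep obstacle once Proposition \ref{prop:SSTT1} is available; the work is organizational. The \emph{one} point to watch is the interaction of the existential quantifier over $\Delta$ with the $\Delta$-independent disjunct $\models_{\C} B$, together with the degenerate cases (e.g.\ $\Gamma = \emptyset$, or the empty conjunction $\top$): one must keep the quantifier domain nonempty so that pulling $\models_{\C} B$ out of the existential is legitimate, which is secured by admitting $\Delta = \emptyset$. The de Morgan rewrite used for (4) is routine given the stated Boolean properties of the quasi-connectives.
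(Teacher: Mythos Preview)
Your proposal is correct and takes essentially the same approach as the paper, which simply remarks that the proof of Proposition~\ref{prop:SSTT1} goes through verbatim with the quasi-conjunction $\bigwedge_{A_i \in \Delta} A_i$ in the role of $A$. Your version is a bit more explicit in that you invoke Proposition~\ref{prop:SSTT1} as a black box and then carefully commute the existential over $\Delta$ with the $\Delta$-independent disjunct $\models_{\C} B$ (noting the de Morgan rewrite for (4)), but the underlying reduction is identical.
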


As for {\C}, the equivalence of (1) with (2), (3) and (4) is not only attractive from a computational point of view, but it also connects probabilistic reasoning with conditionals to the trivalent semantics that defines their truth conditions in the first place. 

Secondly, Proposition \ref{prop:SSTT2} also provides sound and complete calculi for the logic {\U} for free. For instance, since \textcite{cooper1968propositional} has a sound and complete Hilbert-style calculus for {\C}, this automatically translates, thanks to Proposition \ref{prop:SSTT2}, into a sound and complete calculus for {\U}. Validity in {\U} is nothing else but 
%LOR: added the combination of 
the combination of two valid consequence relations in {\C}. Alternatively, still using Proposition \ref{prop:SSTT2}, tableau- and sequent-style sound and complete axiomatizations of {\U} can be extracted from \textcite{ERS2021JPLb}.

Thirdly and finally, defining multi-premise inference in this way yields an attractive set of valid inferences with uncertain premises, as we will see in the next two sections. 

%JS: BLAH BLAH 
%Note also that the logic QCC/SS$\cap$TT that we have rejected in Section \ref{sec:TT} for inference with \textit{certain} premises, e.g. because it does not satisfy Modus Ponens, turns out to be an attractive candidate for inference with \textit{uncertain} premises \parencite[where violating Modus Ponens may be an asset rather than a drawback:][]{McGee1985}.

\section{Properties of \U}\label{sec:principles}

\begin{table}[h!]
\centering
\footnotesize
\begin{tabular}[c]{p{0.42\textwidth}|p{0.48\textwidth}|c|c}
\multicolumn{2}{l|}{\bf Constitutive and Generally Desirable Principles in Uncertain Inference} &  \C & {\U} \\ \hline
Logical Truth & $\models A \to \top$ & \cmark & \cmark \\
Law of Identity & $\models A \to A$ & \cmark & \cmark\\
%LOR: corrected here
Supraclassicality (Laws) & (for $A$ without $\rightarrow$) if $\models_{\sf CL} A$, then $\models A$ & (\cmark) & (\cmark)\\
Left Logical Equivalence & if $A \models_\C B$, $B \models_\C A$, then $A \to C \models B \to C$ & \cmark & \cmark\\
Stronger-Than-Material & $A \to B \models A \supset B$ & (\cmark) & (\cmark)\\
Conjunctive Sufficiency & $A, B \models A \to B$ & \cmark & (\cmark)\\
AND & $A \to B, A \to C \models A \to (B \wedge C)$ & \cmark & \cmark \\
OR & $A \to C, B \to C \models (A  \vee B) \to C$ & \cmark & (\cmark) \\
Cautious Transitivity & $A \to B, (A  \wedge B) \to C \models A \to C$ & \cmark & (\cmark) \\
Cautious Monotonicity & $A \to B, A \to C \models (A  \wedge C) \to B$ & \cmark & \cmark\\
Rational Monotonicity & $A \to B, \neg (A  \to \neg C) \models (A \wedge C) \to B$ & \cmark & \cmark\\
Reciprocity & $A \to B, B \to A \models (A \to C) \equiv (B \to C)$ & \cmark & (\cmark)\\
Right Weakening & if $B \models_Q C$, then $A \to B \models A \to C$ & \cmark & (\cmark) \\
Rule of Conditional K & if $A_1, \ldots, A_n \models_\C C$, then & \cmark & (\cmark)\\ 
& $\qquad (B \to A_1), \ldots, (B \to A_n) \models (B \to C)$ & &\\ \hline
{\bf Optional and Disputed Principles} \\ \hline
Supraclassicality (Inferences) & if $\Gamma \models_{\sf CL} B$ then $\Gamma \models B$ & \xmark & \xmark \\
Modus Ponens & $A \to B, A \models B$ & \cmark & (\cmark) \\
Modus Tollens & $A \to B, \neg B \models \neg A$ & (\cmark) & (\cmark) \\
Simplifying Disjunctive Antecedents & $(A \vee B) \to C \models (A \to C) \wedge (B \to C)$ & (\cmark) & (\cmark) \\ 
Import-Export & $A \to (B \to C)$ if and only if $(A \wedge B) \to C$ & \cmark & \cmark\\
Or-to-If & $\neg A \vee B \models A \to B$ & \cmark & \xmark \\   
Conditional Excluded Middle & $\models (A \to B) \vee (A \to \neg B)$ & \cmark & \cmark\\
\hline
{\bf Connexive Principles (optional)} \\ \hline
Aristotle's Thesis & $\models  \neg (\neg A \to  A)$ & \cmark & \cmark\\
Boethius's Thesis  & $\models (A \to C) \to \neg (A \to \neg C)$ & \cmark & \cmark \\ \hline
{\bf Undesirable Principles} \\ \hline
Contraposition & $A \to C \models \neg C \to \neg A$ & (\cmark) & \xmark \\
Monotonicity & $A \to C \models  (A \wedge B) \to C$ & \cmark & \xmark\\
Transitivity  & $A \to B, B \to C \models A \to C$ & \cmark & \xmark \\ \hline
\end{tabular}  
\caption{\small Overview of Inference Principles involving conditionals in uncertain inference. In the rightmost columns, it is shown whether {\C} and {\U} validate the principle generally (\cmark), only for bivalent valuations of the sentential variables (\cmark in parentheses), or not at all (\xmark).}\label{tab:principles}  
\end{table}

%\amrand{PE}{Extended table with connexive principles, cannot be buried in a fn}

We now evaluate the logic {\U} in terms of the inference schemes it validates, using the principles in Table \ref{tab:principles}, taken from the survey article by \textcite{sep-logic-conditionals}.\footnote{We use {\C} as an appropriate generalization of classical deductive logic in formulating principles like Left Logical Equivalence or Right Weakening.}
%JS: put this sentence into a footnote: it is not central and disturbs the flow
 The principles above the first horizontal line are generally considered to be desirable, or at least not harmful, in uncertain reasoning with conditionals. The principles between the lines---e.g., Modus Ponens, Or-To-If, Import-Export, and Conditional Excluded Middle---are typically a bone of contention between theorists. We also include some tautologies that are distinctive for connexive logics. The principles at the bottom---Contraposition, Monotonicity and Transitivity---are characteristic of most monotonic logics, and logics of deductive inference in particular, but should \textit{not} be satisfied by a non-monotonic logic of uncertain reasoning with conditionals \parencite[for compelling counterexamples, see][]{Adams1965}. So we should expect that these principles are satisfied by {\C}, but not by {\U}. 

Table \ref{tab:principles} evaluates, in the rightmost columns, {\C} and {\U} with respect to all these principles. We cannot discuss each of them in detail, but we make some general observations.
%\footnote{Note that  Conditional Excluded Middle follows from the property that our conditional commutes with negation. That is, $\neg (A \to C)$ has the same truth table as $A \to \neg C$. For indicative conditionals, Conditional Excluded Middle is defended, for example, by \textcite{stalnaker1980defense}.} 
%JS: Deleted this footnote since CEM is only satisfied for AC valuations. 
Many desirable or non-harmful principles are satisfied by {\U} without restriction, whereas some of them only hold for bivalent (``atom-classical'') valuations of at least one sentential variable. This means that when all sentences are conditional-free, the inference is valid; only when one of the sentences contains a conditional connective (so that it can take the third truth value), it is possible that the inference fails. When we compare {\U} to \textit{classical} conditional logics (i.e., logics where all valuations are bivalent, such as Stalnaker-Lewis logics), we can consider the principles valid since making a comparison presupposes bivalent valuations. Specifically, {\U} recovers all valid inferences of System {\sf P}, which is a classical benchmark for conditional logics \parencite{Adams1975,KLM1990}.\footnote{\textcite{Adams1975} characterized his logic of uncertain inference by seven syntactic principles whose combination is known as System {\sf P}: the Law of Identity, AND, OR, Cautious Monotonicity, Left Logical Equivalence, and Right Weakening.} Moreover, both {\C} and {\U} validate connexive principles such as Aristotle's Thesis ($\neg (\neg A \to A)$) and Boethius's Thesis ($(A \to C) \to \neg (A \to \neg C)$).
%JS: streamlined the terminology in this para and below---I always use ``bivalent valuation'' instead of ``atom-classical valuation'', often ``sentential variables'' or ``sentences'' instead of ``variables'' (also, ``conditional-free fragment of L->'', instead of ``AC fragment of L->'')

Principles that are typically considered problematic---Monotonicity, Contraposition, Transitivity, \parencite{sep-logic-conditionals}---are indeed \textit{not} valid in {\U}. These principles do not even hold when we restrict {\U} to bivalent valuations of sentential variables. However, they \textit{do} (mainly) hold in our logic of certain inference \C, in line with our view of {\C} as a generalization of classical deductive logic to a language with a conditional. 

%is \textit{not} due to the introduction of a third truth value, but also holds when all involved propositions are classical (i.e., no proposition contains a conditional connective). 

Most interesting are the six principles in the middle. Supraclassicality fails because {\C} does not support Explosion, e.g., while $A \wedge \neg A \models_{\sf CL} B$ holds for any two sentences $A$ and $B$, it is not the case that $A \wedge \neg A \models_{\C} B$. However, all classical laws are theorems of both {\C} and {\U} when restricted to bivalent valuations. Modus Ponens and Modus Tollens hold for conditional-free sentences, but break down for nested conditionals---in line with McGee's famous objections (see the next section for a detailed analysis). Also Simplification of Disjunctive Antecedent is preserved for bivalent valuations only. 

Import-Export holds unrestrictedly, since $A \to (B \to C)$ and $(A \wedge B) \to C$ 
%are congruent: they (LOR: removed the "congruent" terminology)
have exactly the same truth conditions. The principle is intuitively plausible: ``it appears to be a fact of English usage, confirmed by numerous examples, that we assert, deny, or profess ignorance of a compound conditional $A \to (B \to C)$ under precisely the circumstances under which we assert, deny, or profess ignorance of $(A \wedge B) \to C$'' \parencite[489]{McGee1989}.  Experimental evidence seems to confirm this attitude \parencite{wijnbergen2015probability}. Indeed, the main motivation for giving up Import-Export---e.g., in  Stalnaker-Lewis semantics, but also in the probabilistic semantics of \textcite{sanfilippo2020probabilities}---is not its implausibility, but the pressure from Gibbard's and Lewis's triviality results, where Import-Export is an important premise. Some accounts therefore restrict the validity of Import-Export to simple conditionals and set up an error theory of why we infer from there to the general validity of the principle  \parencite[e.g.,][]{Mandelkern2020IE}. By contrast, both {\C} and {\U} can incorporate Import-Export since the triviality results do not apply to these logics  \parencite{ERS2022Gibbard}. %This strategy is arguably preferable to \textit{ad hoc} solutions and also yields benefits in the analysis of Modus Ponens (see below). 

Conditional Excluded Middle (CEM) is a validity of {\C}, and is therefore valid in {\U} as well. Numerous analyses of indicatives endorse CEM \parencite[e.g.,][]{stalnaker1980defense,Williams2010,ciardelli2020indicative,Santorio2022path}, but there are also notable opponents \parencite[e.g.,][]{Gillies2009,Kratzer2012}. A natural way to argue for CEM is to note that it is an immediate consequence of commutation with negation, i.e., the semantic equivalence between $\neg (A \to B)$ and $A \to \neg B$, which also holds in our system. To see this, note that $(A \to B) \vee \neg (A \to B)$---an instance of the Law of Excluded Middle---immediately entails $(A \to B) \vee (A \to \neg B)$, that is CEM. 

%By the same token, Conditional Non-Contradiction, i.e. $\neg[(A \to B) \wedge (A \to \neg B)]$, immediately follows from $\neg[(A \to B) \wedge \neg (A \to B)]$ (an instance of the Law of Non-Contradiction) via negation commutation. CEM and Conditional Non-Contradiction are therefore immediate, in our setting, from their non-conditional counterparts $A \vee \neg A$ and $\neg (A \wedge \neg A)$ via negation commutation. 

Finally, a crucial difference between {\C} and {\U} concerns the relation of the indicative to the material conditional $A \supset B := \neg A \vee B$ (read as the quasi-disjunction of $A$ and $B$). On the one hand, $A \supset B \models_{\C} A \to B$, i.e., if we know that Alice or Bob ordered a beer, then, if we learn that Alice did not order a beer, we can infer that Bob did so. This apparently valid Or-to-If inference is a classical argument for analyzing the indicative conditional in line with the material conditional, and {\C} captures this intuition. However, this inference is \textit{invalid} when we infer the conditional from an uncertain disjunction. A good illustration of this failure is given by \textcite[p. 191]{Edgington1986}: if I am 90\% confident that it is 8 o'clock, then I am at least as confident that it is 8 or 11 o'clock, but that does not give me the same confidence that if it is not 8 then it is 11 o'clock. Indeed, Or-to-If fails in {\U}, as we want to have it. Actually, neither does the material conditional imply the indicative conditional in {\U}, nor vice versa.  

However, the simple, non-nested indicative conditional often appears to be more demanding to assert than the material conditional \parencite[e.g.,][]{gibbard1980two,Gillies2009}. Can our account then explain this ``Stronger-Than-Material'' intuition? Yes---because for bivalent valuations that use only classical truth values, $A \rightarrow B$ entails $A \supset B$ in both {\C} and in {\U}. In the context of uncertain reasoning with conditional-free statements, $p(A \to B) = p(B|A) \le p(A \supset B)$ is a theorem. In summary, we have Or-to-If as a valid principle for reasoning from certain premises, but not from uncertain premises; nonetheless, we show that why $A \to B$ is less acceptable than $A \supset B$ whenever antecedent and consequent are conditional-free sentences.

\section{Modus Ponens, Tollens, and Import-Export}\label{sec:MP}

Modus Ponens appears invariably valid in inference from \textit{certain} premises, but a famous counterexample by \textcite{McGee1985} challenges its validity in inference from \textit{uncertain} premises. It concerns the 1980 U.S. presidential elections.  
\begin{quote}
\smaller
If a Republican wins the election, then, if Reagan does not win, Anderson will win.

\smallskip

A Republican will win the election.\\
\rule{330pt}{0.2pt}

\smallskip

Therefore, if Reagan does not win the election, Anderson will.
\end{quote}
At some point before the elections, the two premises were commonly accepted: Ronald Reagan was predicted to win the election, and Anderson was the runner-up behind Reagan in the Republicans' primary race. By Modus Ponens we infer that if Reagan does not win, Anderson will. The logical form of that inference is: from $A \to (B \to C)$ and $A$, infer, by Modus Ponens, $B \to C$. However, in the polls  Anderson was actually trailing both Reagan and Carter, the democrat incumbent. Therefore, if Reagan was not elected president, the best prediction would be that Carter would be elected, contradicting the conclusion. 

McGee's counterexample has generated a large amount of literature concerning the validity of Modus Ponens.\footnote{\textcite{SMF1986} respond that the conclusion should be evaluated as a material conditional---which would be a plausible proposition---, and argue that the burden is on McGee to show that this interpretation of the conditional is inadequate. But this defensive strategy is threatened by the strong theoretical and empirical arguments against the material conditional view, in particular the paradoxes of material implication, and the fact that judgments on the probability or assertability of $A \to C$ align with $p(C|A)$, not with $p(\neg A \vee C)$ \parencite[e.g.,][]{OverEtAl2007}.} As stressed by McGee, the intuitive appeal of the counterexample depends crucially on the use of nested conditionals. In particular, \textcite{SternHartmann2018} show that when the major premise of Modus Ponens is a nested conditional, the probability loss in inferring to the conclusion can be much higher than when we apply Modus Ponens to non-nested premises. For bivalent propositions $A$ and $B$, the term
\begin{equation}\label{eqn:MP1}
p(B) = p(B|A) p(A) + p(B|\neg A) (1-p(A))  
\end{equation}
is, by the Law of Total Probability, well controlled by the values of $p(A)$ and $p(B|A)$---the values that represent the probability of the two premises of Modus Ponens. For example, if both values exceed .9, then $p(B) \ge .81$, so the product of the two probabilities is still a reasonably high value. 

However, in the case of right-nested conditionals, the probability of the conclusion of Modus Ponens is poorly controlled: 
%\amrand{LR}{The notation $p(C|A,B)$ and $p(C|\neg A, B)$is not fully clear to me: I've used a conjunction (since we appeal to Import-Export). Probably needs to be checked.} % Paul: I checked it and corrected it Lorenzo, there was a mistake in scope of negation in second term.
\begin{equation}\label{eqn:MP2}
p(C|B) = p(C|A \wedge B) p(A|B) + p(C|\neg A \wedge B) (1-p(A|B))  
\end{equation}
Suppose that  premises are highly plausible, e.g. $p(A) \ge .9$ and $p(C|A \wedge B) \ge .9$, where the latter probability has been calculated by applying Import-Export and Adams's Thesis to $A \to (B \to C)$. Then you can still assign extremely low values to three of the four probabilities on the right hand side of equation \eqref{eqn:MP2}, and derive a very low value of $p(C|B)$. Therefore the probability loss is more pronounced in McGee's example than when we apply Modus Ponens to simple conditionals. 

Our logics mirror this diagnosis: Modus Ponens is valid in {\C}, i.e., in certain inference, and valid in {\U} for 
%LOR: bivalent --> atom-classical, otherwise we cannot infer that B itself is a conditional (with a false antecedent)
bivalent valuations, i.e., when all involved 
%LOR: propositions --> propositional constants, same reason as above
propositional constants are classical. However, {\U} does \textit{not} validate the unrestricted form of Modus Ponens, and in fact, the only countermodel to the schema $A \to B, A \models B$ is $v(A) = 1$ and $v(B) = \half$  (i.e., $B$ is a conditional with false antecedent).\footnote{Suppose that ``A Republican will win'' is true if and only if Reagan or Anderson wins. The main conditional then has  probability 1 (since Or-to-If is valid in \C), the disjunction has high probability, and the consequent has a low probability. Thus, nested Modus Ponens in McGee-type examples fails if and only if the associated Or-to-If inference fails. The fact that McGee's argument is analyzed as valid in \C\ and as invalid in \U\ is also in accordance with the ambivalence generally felt regarding whether the argument is valid or not; specifically, also \textcite{Neth2019} and \textcite{Santorio2022PPR} distinguish between the validity of Modus Ponens in certain and uncertain inference.}
%Applied to McGee's example, the countermodel describes a world where Reagan wins and Anderson doesn't. This valuation assigns value $1$ to the (quasi-)conjunction of the premises (since Reagan is a Republican), but value $\half$ to the conclusion (which is retracted since Reagan did win). 
The same kind of analysis can be applied to showing that Modus Tollens, i.e., the schema $A \to B, \neg B \models \neg A$, is valid for simple conditionals, but not for arbitrary nested conditionals.

Since Import-Export features crucially in McGee's counterexample (e.g., in Stern and Hartmann's probabilistic reconstruction), philosophers and logicians have often faced a choice between both principles. For example, \textcite{Stalnaker1968} and \textcite{Lewis1973} give up Import-Export, but retain Modus Ponens. %, whereas \textcite{McGee1989} makes the opposite choice. 
%Another choice is to retain Modus Ponens, but to restrict the validity of Import-Export, as 
So does \textcite{Mandelkern2020IE}, who restricts the validity of Import-Export.\footnote{More precisely, Mandelkern shows that a conditional satisfying Conditional Introduction (i.e., the meta-inference from $\Gamma, A \models B$ to $\Gamma \models A \to B$) and both Modus Ponens and Import-Export is equivalent to the material conditional. He suggests to restrict  the scope of Import-Export to cases where the ``middle proposition'' $B$ in $A \to (B \to C)$ does not contain a conditional.} Our trivalent framework makes the opposite and arguably more natural choice: like \textcite{McGee1989}, we let Import-Export be unrestrictedly valid and restrict the validity of Modus Ponens. This account does not only give a convincing analysis of McGee-style examples, which are 
%LOR: universally --> typically
typically recognized as a problem for Modus Ponens in uncertain reasoning, but also agrees with psychological evidence in favor of Import-Export and simple Modus Ponens.

\section{Comparisons}\label{sec:comp}

The trivalent treatment of indicative conditionals is first sketched in \textcite{reichenbach1935wahr} and \textcite{definetti1936logique,finetti1936logic}. A more detailed motivation of this approach, including an overview of the main consequence relations of interest, is given by \textcite{belnap1970conditional,belnap1973restricted}, but none of these authors provides a fully worked out account of the logic and epistemology of conditionals. The first complete trivalent account of a logic of conditionals is due to \textcite{cooper1968propositional}, who originally created system {\C}. However, Cooper restricts it to bivalent valuations of the sentential variables, without applying it to the entire language $\LL^\to$, and does not connect it to the probability of conditionals. \textcite{cantwell2008logic} investigates the logical consequence relation of {\C} (=preservation of non-falsity), but uses Strong Kleene connectives for conjunction and disjunction. Moreover, his treatment of ``non-bivalent probability'' ends up with an altogether different probabilistic logic \parencite{cantwell2006laws}.

%Other trivalent approaches like \textcite{goodman1991conditional} and \textcite{dubois1994conditional} pursue an algebraic treatment with a trivalent, or even multivalent evaluation of conditionals \parencite[e.g.,][]{milne1997bruno,Milne2004,sanfilippo2020probabilities}. 

Most similar to our approach, both in spirit and content, are the trivalent accounts developed by \textcite{dubois1994conditional} and \textcite{mcdermott1996truth}. However,  these authors stick to de Finetti's original truth table and (in the case of McDermott) use Strong Kleene truth tables for conjunction and disjunction. The semantic features are thus quite different. On the level of inferences, many features are similar, but McDermott's logic validates Transitivity  ($A \to B$, $B \to C$, therefore $A \to C$). While this is acceptable and even desirable in the framework of \textit{certain} inference, it is arguably problematic when reasoning from \textit{uncertain} premises since the probability of $p(C|A)$ is in no way controlled by $p(C|B)$ and $p(B|A)$; in fact, it can be arbitrarily low. Suppose that you live in a very sunny, dry place. Consider the sentences $A$ = ``it will rain tomorrow'', $B$ = ``I will work from home'', $C$ = ``I will work on the balcony''. Clearly, both $A \to B$ and $B \to C$ are highly plausible, but $A \to C$ isn't. This structural feature offers, in our view, a decisive reason to prefer our model to McDermott's. Dubois and Prade avoid that feature, but like Adams and Cooper, they restrict their account to the flat fragment of $\LL^\to$, i.e., allowing only simple, non-nested conditionals. 

\begin{table}[htb]
\footnotesize
\[
\begin{tabular}[c]{p{0.4\textwidth}||c|c||c|c|c}
\centering
& \multicolumn{2}{|c|}{Trivalent Logics} & \multicolumn{3}{|c}{Bivalent Logics}\\ \hline
Inference Principle & {\U} & {\sf MD} & {\sf P} & {\sf VC} & {\sf C2} \\ 
\hline
%Proponents &  Égré et al. & Adams 1975 & Lewis 1973 & Stalnaker 1968 & McDermott 1996\\
%\hline

Stronger-Than-Material & (\cmark) & \cmark & {\cmark} & \cmark & \cmark \\
Conjunctive Sufficiency &  (\cmark) & \cmark & \cmark & \cmark & \cmark  \\
OR & (\cmark) & \cmark & \cmark & \cmark & \cmark \\
Cautious Transitivity & (\cmark) & \cmark & \cmark & \cmark & \cmark \\
Transitivity  & \xmark & \cmark & \xmark & \xmark &\xmark \\
Modus Ponens & (\cmark) & (\cmark) & \cmark & \cmark & \cmark  \\
Modus Tollens & (\cmark) & (\cmark) & \cmark & \cmark & \cmark \\
Import-Export & \cmark & \cmark & N/A & \xmark & \xmark \\
SDA & (\cmark) & \cmark & N/A %(by def.) 
& \xmark & \xmark  \\  
Rational Monotonicity & \cmark & (\cmark) & {N/A} & \cmark & \cmark  \\
Conditional Excluded Middle & \cmark & \cmark & {N/A} & \xmark & \cmark \\
\end{tabular}
\]
\caption{\small Comparison of the logic {\U} with alternative conditional logics, restricted to inference principles where not all of the logics agree. The surveyed alternatives are System {\sf P}, Lewis' {\sf VC}, Stalnaker's {\sf C2}, and McDermott's {\sf MD}.}\label{tab:comp}  
\end{table}

On the side of reasoning, our logic {\U} generalizes the benchmark account of uncertain reasoning developed in \citeauthor{Adams1975}'s (\citeyear{Adams1975}) monograph \textit{The Logic of Conditionals}. In this book, Adams equates the probability of a conditional $A \to C$ with the conditional probability $p(C|A)$, and develops a probabilistic logic of uncertain reasoning with conditionals on that basis. The descriptive accuracy of the predictions of Adams's logic is acknowledged both by philosophers and by psychologists of reasoning \parencites[e.g.,][487--488]{McGee1989}[544]{ciardelli2020indicative}{OverEtAl2007}{over2017defective}, but due the lack of general truth conditions for compounds and Boolean combinations of conditionals, it has limited scope. The incompleteness of the theory has encouraged more ambitious theorists to pursue different roads (e.g., modal semantics or dynamic semantics). Our account recovers all the inferences in Adams's logic of reasonable inference without suffering from these restrictions. Specifically, some principles that Adams needs to postulate as axioms, such as the equation $p(A \to B) = p(B|A)$ (for $A, B \in \LL$) or the Import-Export Principle, emerge as \textit{corollaries} of our semantics. This makes our account more unified and coherent than Adams's.

We conclude our comparisons with a note on other truth-conditional approaches. The classical modal semantics for a conditional $A \to C$ defines it as true if $C$ is true at the closest possible $A$-world \parencites[e.g., as defined by Stalnaker's selection function or Lewisian spheres:][]{Stalnaker1968,stalnaker1975indicative,Lewis1973,Lewis1973b,McGee1989}. If $A$ is true in the actual world, the truth value of the conditional corresponds to the truth value of the consequent, as in our analysis. The fundamental difference emerges when $A$ is false: while we assign a third truth value to the conditional, modal theorists assign a classical truth value, essentially based on epistemic considerations (``is $C$ the case in a plausible world, or set of worlds, where $A$ is the case?''). In other words, Stalnaker-Lewis semantics creates a disparity between the case where $A$ is true, where truth conditions are factual, and the case where $A$ is false, where truth conditions depend on considerations of plausibility and normality. On our approach, epistemological considerations are relevant for assertion and reasoning, but truth conditions are entirely factual.

%LOR: I've added the remark above in order to address a remark by referee no. 2. I believe this is sufficient, since (as far as the difference we discuss here is concerned) the standard approach and the dynamic one (or the other ones mentioned by the referee) do not diverge much (maybe with the only exception that dynamic semantics can be partial, e.g. for some modals, as in Mandelkern's work). However, what matters here for us is the different treatment of true antecedent vs. false antecedent, and that stands (I believe). I did not explicitly add references because I have troubles with the bib file (my .tex file does not compile references in this format, but I still don't know why), and so I would not be able to make sure I avoided mistakes that would make it not compile. 
%\textcolor{blue}{This disparity stands in need of an explanation, and the trivalent approach presents a more unified semantic and epistemological package}. 

%

%JS: I think the comparison to modal accounts is important: not so much for criticizing them but for highlighting how our approach is different from the mainstream. 

Modern developments of modal semantics go beyond possible-world selection functions. Their common denominator is to evaluate a conditional $A \to C$ as true if $C$ is true in all relevant contexts selected by the antecedent $A$ \parencite[e.g.,][]{Kratzer1986,Mandelkern2019ID}. Specifically, dynamic and information state semantics implement this idea by \textit{updating} on $A$ \parencite[e.g.,][]{Gillies2009,Santorio2022path}. These accounts integrate the semantics of ``if\ldots{}then\ldots'' with the semantics of other modal operators, but they struggle to give a quantitative analysis of the probability of conditional which squares with the truth conditions and yields Adams's thesis \parencite[though see][]{GoldsteinSantorio2021}. The connection to probabilistic reasoning, and the distinction between certain and uncertain inference, is therefore easier to make for us than for them. Moreover, in order to obtain full truth conditions that are stronger than the material conditional, Gibbard's (\citeyear{gibbard1980two}) triviality result forces modal accounts to give up Import-Export (or another very plausible principle such as Supraclassicality), limiting them in their ability to analyze complex conditionals. As explained in Section \ref{sec:principles}, the trivalent account does not need to make similar concessions \parencite[see also][]{Lassiter2020,ERS2022Gibbard}.

\section{Conclusions}\label{sec:ccl}

%The philosophical upshot of the paper is simple: in a trivalent setting the probability of conditionals can be obtained 

%there is no gap between the truth conditions of conditionals and their probabilistic treatment. 

The trivalent analysis in this paper closes the gap between the truth conditions of conditionals, their probabilistic semantics, and our (certain and uncertain) reasoning with them. Specifically, we propose two logics that generalize the concept of valid inference to reasoning with conditionals: {\C} explicates conditional reasoning with certain premises, {\U} explicates conditional reasoning with uncertain premises. Although {\C} is a paraconsistent logic, all theorems of classical logic are also theorems of {\C} when restricted to bivalent valuations. The combination of \C\ and \U\ avoids Gibbard's and Lewis's triviality results, and provides a unified framework for conditional reasoning, in light with the observation that some inference schemes (e.g., Or-To-If, nested Modus Ponens) appear valid in certain and invalid in uncertain reasoning. 

%\footnote{This does not hold for all valid inferences of classical logic since $A \wedge \neg A \not{\models}_{\sf Q} B$, due to the paraconsistent character of {\C}.} 
%JS: footnote deleted, discussed in Section 7 in detail

Summarizing the main features and results of our approach according to topics:

\begin{description}
  \item[Truth Conditions] The indicative conditional expresses a conditional commitment to the consequent, retracted if the antecedent turns out false. This interpretation motivates a fully truth-functional trivalent analysis of the conditional. Following Cooper, we group indeterminate antecedents with true ones, and interpret conjunction and disjunction according to his truth tables for quasi-conjunction and -disjunction. 
  \item[Probability] The probability of a sentence of $\LL^\to$ is the ratio of the weight of possible worlds where it is true, divided by the weight of possible worlds where it is either true or false. Adams's Thesis $p(A \to C) = p(C|A)$ for conditional-free sentences follows as a corollary and need not be postulated as an axiom.
  \item[Certain Inference] Conditional reasoning from \textit{certain} premises is captured by the logic {\C}, which can be characterized as preservation of maximal probability, and equivalently as preservation of non-falsity in trivalent semantics (Proposition \ref{prop:TT}). 
  \item[Uncertain Inference] Conditional reasoning with \textit{uncertain} premises is captured by the logic {\U}, which preserves probability between the quasi-conjunction of the premises and the conclusion. Equivalently, {\U} preserves truth \textit{and} non-falsity for all trivalent valuations of the premises and the conclusion (Proposition \ref{prop:SSTT1} and \ref{prop:SSTT2}).   
\end{description}
Combining these semantic and epistemological elements delivers a coherent and fruitful framework. Specifically, we can use it to analyze and to explain the controversy about the validity of Modus Ponens, Or-to-If, Import-Export and other important inference principles. 

More work needs to be done. The most urgent projects are to explore whether this analysis can in any way be connected to the semantics and epistemology of counterfactuals, and to integrate our analysis with an account of modal operators in natural language, such as ``must'' and ``might''. Possible ways of achieving this are to find an equivalent modal semantics, or to embed the present trivalent approach into a modal framework. We leave these issues for further research.

%Comparisons with competitors are arguably favorable and the practical benefits of a truth-functional account (e.g., automated theorem-proving) are evident. \amrand{PE}{We give ourselves too much of a satisfecit; maybe end with some open problems/ things not considered and to be done} 
%LOR: I restored Jan's original closing sentence, as I thought it was nice. We can remove it again, of course
%We are eager to hear from critics and proponents of competing views. 

%\newpage

\newrefcontext[sorting=nyt] 
\printbibliography

\newpage

\appendix
\section{Proofs of the Propositions}\label{app:proofs}

Given a model, consisting of a nonempty set of worlds $W$ and a valuation function $v$, recall that $A_T, A_I, A_F \subseteq W$ denote the set of possible worlds where $A$ is true, indeterminate, and false, respectively. Here and in the remainder, we identify possible worlds with complete valuation functions to all sentences in the language $\mathcal{L}_\to$. 

%\amrand{PE}{What do we mean by complete valuations? also don't we need to say that $c(\top)>0$?} TAKEN CARE OF

% \propA*

\propB*
%\begin{proposition}\label{prop:CCTT}
%$A \models_{Q} B$ if and only if for all probability functions $p: L_\to \to [0,1]$ that assign $p(A) = 1$, also $p(B) = 1$.
%\end{proposition} 

\begin{proof} 
``$\Rightarrow$''. Suppose $A \models_{\C} B$. This means that for every model, $B_F \subseteq A_F$. Suppose now that $p(A)=1$ for some probability function $p$: by (\ref{eqn:ast}), this requires $c(A_F)= 0$. But since $B_F \subseteq A_F$, and the measure properties of $c$, also $c(B_F) \le c(A_F) = 0$ and hence $p(B)=1$. 

``$\Leftarrow$''. Suppose that for any $p$ with $p(A) = 1$, also $p(B) = 1$. Suppose further that $A \not{\models}_{\C} B$, i.e., there is a model and a world $w \in B_F$ with $w \not{\in} A_F$. Choose 
%LOR: $m$ --> $c$
$c$ such that  $c(w) = 1$, i.e., $w$ has maximal credence, and in particular, $c(w') = 0 \ , \forall w' \ne w$. Then $c(A_F)=c(B_T)=0$, and
\begin{align*}
p(A) &= \frac{c(A_T)}{c(A_T) + c(A_F)} = \frac{c(A_T)}{c(A_T) + 0} = 1,\ \text{but}\\
p(B) &= \frac{c(B_T)}{c(B_T) + c(B_F)} = \frac{0}{0+1} = 0, 
\end{align*}
contradicting what we have assumed. %(If $w \in A_I$, then $c(A_T) = 0$, but the argument does not change.) 
Hence it must be the case that $A \models_{\C} B$. 

The generalization to more than one premise is straightforward since $A_1, \ldots A_n \models_\C B$ if and only if $\bigwedge A_i \models_\C B$. 
\end{proof}
%\amrand{PE}{Why do we need the restriction on B not being Q-valid in statement of Prop? Mention in proof as particular case?} 

\propC* 

%Note: if $\models_\C B$, then $p(A) \le p(B)$ is vacuously true for any $A$ and any probability function $p$ since $p(B)=1$, while $A \models_{\sf QCC/SS} B$ may yet be false. 
%JS: superfluous

%\begin{proposition}\label{prop:CCSS}
%Suppose $\not{\models}_Q B$, i.e., $B$ is no theorem of \textbf{Q} (=there is a possible world where $B$ is false). Then, $A \models_{QCC/SS} B$ if and only if for all probability functions $p: L_\to \to [0,1]$: if $p(A) > 0$, then also $p(B) > 0$.
%\end{proposition} 
\begin{proof} 
``$\Rightarrow$''. Suppose $A \models_{\sf QCC/SS} B$. This means that for every model, $A_T \subseteq B_T$. Suppose now that $p(A)>0$; since we have excluded the case $c(A_I) = 1$ we have strictly positive credence that $A$ is true. In other words, $c(A_T)> 0$. Since $A_T \subseteq B_T$, it follows that $c(B_T) \ge c(A_T) > 0$, and hence $p(B) > 0$.\\
%JS: comments were spot on, but are now superseded \textcolor{blue}{or $c(A_T)+c(A_F)=0$}. \textcolor{blue}{in the latter, $p(A)=1$, but then ?? I think we could have $p(B)=0$. For instance, although $a \models a\vee b$, I think that $c(a_T)$ and $c(a_F)$ could be 0, but $c(a_I)>0$. So $p(a)$ would be 1, but we could have $p(a\vee b)=0$}
``$\Leftarrow$''. Suppose $A_T \ne \emptyset$ (otherwise the proof is trivial). We suppose further that $A \not{\models}_{\sf QCC/SS} B$, i.e., there is a world $w \in A_T$ with $w \not{\in} B_T$. Moreover, by assumption (=$B$ is no theorem of \C) there must be a world $w' \in B_F$. Then we choose $c(w) = c(w') = \nicefrac{1}{2}$ (for the case $w = w'$, choose $c(w) = 1$) and infer 
\begin{align*}
p(A) &= \frac{c(A_T)}{c(A_T) + c(A_F)} = \left( 1 + \frac{c(A_F)}{c(A_T)} \right)^{-1} %\ge  \left( 1 + 2 \cdot c(A_F) \right)^{-1} 
> 0, 
\end{align*}
and moreover, since 
%LOR: $w, w' \ne B_T$ --> $w, w' \ne B_T$, 
$w, w' \notin B_T$, 
\begin{align*}
p(B) &= \frac{c(B_T)}{c(B_T) + c(B_F)} = \frac{0}{0+\nicefrac{1}{2}} = 0, 
\end{align*}
contradicting what we assumed. Hence $A {\models}_{\sf QCC/SS} B$.
\end{proof}

\propD*
%\begin{proposition}\label{prop:nodrop}
%Suppose $B$ is no theorem of \textbf{Q}, i.e., $\not{\models}_Q B$. Then the following are equivalent: 
%\begin{enumerate}
%  \item[(1)] For all probability functions $p: L_\to \to [0,1]$, $p(A) \le p(B)$.
%  \item[(2)] $A \models_Q B$ and $\neg B \models_Q \neg A$. 
%  \item[(3)] $A \models_{QCC/SS\cap TT} B$ . 
%\end{enumerate}
%\end{proposition} 

\begin{proof}
We reason by cases and begin with the case $\models_{\C} B$. In this case, $p(B) = 1$ and hence, (1), (2) and (3) are all true. In the remainder, we can therefore neglect this case and assume that there is at least a world $w \in B_F$. We simplify and unify notation and write ``$\models_{\sf SS}$'' instead of ``$\models_{\sf QCC/SS}$'', and ``$\models_{\sf TT}$'' instead of ``$\models_{\sf QCC/TT}$'' or ``$\models_{\C}$''. First, we show the equivalence of (2) and (3). 
\medskip

(2)$\Rightarrow$(3): By assumption, we already have $A \models_{\sf TT} B$. Suppose $\neg B \models_{\C} \neg A$; this means that $(\neg A)_F \subseteq (\neg B)_F$, or equivalently, $A_T \subseteq B_T$. But the latter is the same as $A \models_{\sf SS} B$. So both the {\sf SS}- and the {\sf TT}-entailment holds between $A$ and $B$.\\

(3)$\Rightarrow$(2): Suppose $A \models_{\sf QCC/SS\cap{}TT} B$. This implies $A \models_{\sf TT} B$ trivially; we still have to show $\neg B \models_{\sf TT} \neg A$. But since $A \models_{\sf QCC/SS} B$, we have $A_T \subseteq B_T$ and hence $(\neg A)_F \subseteq (\neg B)_F$. The latter is equivalent to $\neg B \models_{\sf TT} \neg A$.\\

(3)$\Rightarrow$(1): By assumption, $A_T \subseteq B_T$ and $B_F \subseteq A_F$. Hence, $c(A_T) \le c(B_T)$ and $c(A_F) \ge c(B_F)$. Thus, for all probability functions $p: L_\to \to [0,1]$, 
\begin{align*}
p(A) &= \frac{c(A_T)}{c(A_T)+c(A_F)} = \left(1 + \frac{c(A_F)}{c(A_T)} \right)^{-1} \le \left(1 + \frac{c(B_F)}{c(B_T)} \right)^{-1} = p(B).
\end{align*}
(1)$\Rightarrow$(3): Let us first deal with the case $A_T = \emptyset$. In that case, $A \models_{\sf SS} B$ is trivially satisfied. The only way for (3) to be false is if there is a $w \in A_I \cap B_F$, such that $A \models_{\sf TT} B$ fails. However, in that case, we can assign $c(w) = 1$, obtaining $p(A) = 1$ and $p(B) = 0$. So (1) would fail, too. For this reason, we can presuppose in the remainder that $A_T \ne \emptyset$. 

We now prove the converse, i.e., $\neg$(3) $\Rightarrow \neg$(1). Assume first that $A \not{\models}_{\sf QCC/SS} B$, i.e., $A_T \cap (B_F \cup B_I) \ne \emptyset$.
\begin{description}
  \item[Case 1:] $A_T \cap B_F \ne \emptyset$. Choose a $w \in A_T \cap B_F$ and a probability distribution with $c(w) = 1$, yielding $p(A) = 1$ and $p(B) = 0$. So $\neg$(1) holds.
  \item[Case 2:] $A_T \cap B_F = \emptyset$. Choose a $w \in A_T \cap B_I$. However, since $B$ is by assumption no theorem of QCC/TT, we know that there is a $w' \in B_F$. Assign the credences $c(w) = c(w') = \nicefrac{1}{2}$. Then we obtain the following counterexample to (1):
\begin{align*}
p(A) &= \frac{c(A_T)}{c(A_T) + c(A_F)} \ge \frac{\nicefrac{1}{2}}{\nicefrac{1}{2} + c(A_F)} \ge \nicefrac{1}{2}\\
p(B) &= \frac{c(B_T)}{c(B_T) + c(B_F)} = \frac{0}{0+\nicefrac{1}{2}} = 0. 
\end{align*}
\end{description}
Now assume that $A \not{\models}_{\sf TT} B$, i.e., $B_F \cap (A_T \cup A_I) \ne \emptyset$. If there is a $w \in B_F \cap A_T$, we are done: simply assign maximal credence to this world, and we obtain that $p(A) > p(B)$. If there is only a $w \in B_F \cap A_I$, by contrast, we assign $c(w) = 1/2$, and moreover, we choose an arbitrary $w' \in A_T \cap (B_T \cup B_I)$ with $c(w') = 1/2$. Such a $w'$ must exist since we have assumed $A_T \ne \emptyset$. Then, we construct a counterexample to (1) as follows: 
\begin{align*}
p(A) &= \frac{c(A_T)}{c(A_T) + c(A_F)} = \frac{\nicefrac{1}{2}}{\nicefrac{1}{2} + 0} = 1\\
p(B) &= \frac{c(B_T)}{c(B_T) + c(B_F)} \le \frac{\nicefrac{1}{2}}{\nicefrac{1}{2}+\nicefrac{1}{2}} = \nicefrac{1}{2}
\end{align*}
\end{proof}

The proof of Proposition 4 proceeds exactly as the proof of Proposition 3, with the (quasi-)conjunction $A_1 \wedge \ldots \wedge A_n$ taking the role of $A$. Since there are no structural differences, we omit it. 

\end{document}